\newtheorem{proposition}{Proposition}
\newtheorem{definition}{Definition}
\newtheorem{lemma}{Lemma}
\newtheorem*{proposition*}{Proposition}
\newtheorem*{lemma*}{Lemma}
\DeclareRobustCommand{\mb}[1]{\boldsymbol{#1}}
\DeclareMathOperator*{\argmax}{argmax}
\DeclareMathOperator*{\argmin}{argmin}
\DeclareMathOperator*{\Tr}{Tr}
\renewcommand{\mid}{~\vert~}
\newcommand{\mbb}{\mb{b}}
\newcommand{\mbx}{\mb{x}}
\newcommand{\mby}{\mb{y}}
\newcommand{\mbz}{\mb{z}}
\newcommand{\mbA}{\mb{A}}
\newcommand{\mbB}{\mb{B}}
\newcommand{\mbC}{\mb{C}}
\newcommand{\mbD}{\mb{D}}
\newcommand{\mbH}{\mb{H}}
\newcommand{\mbI}{\mb{I}}
\newcommand{\mbK}{\mb{K}}
\newcommand{\mbM}{\mb{M}}
\newcommand{\mbQ}{\mb{Q}}
\newcommand{\mbR}{\mb{R}}
\newcommand{\mbS}{\mb{S}}
\newcommand{\mbT}{\mb{T}}
\newcommand{\mbU}{\mb{U}}
\newcommand{\mbV}{\mb{V}}
\newcommand{\mbW}{\mb{W}}
\newcommand{\mbX}{\mb{X}}
\newcommand{\mbY}{\mb{Y}}
\newcommand{\mbZ}{\mb{Z}}
\newcommand{\mbPi}{\mb{\Pi}}
\newcommand{\bbR}{\mathbb{R}}
\newcommand{\bbS}{\mathbb{S}}
\newcommand{\cG}{\mathcal{G}}
\newcommand{\cH}{\mathcal{H}}
\newcommand{\cO}{\mathcal{O}}
\newcommand{\cP}{\mathcal{P}}
\newcommand{\cQ}{\mathcal{Q}}
\newcommand{\cS}{\mathcal{S}}
\newcommand{\cV}{\mathcal{V}}
\newcommand{\reals}{\mathbb{R}}
\newcommand{\expect}{\mathbb{E}}
\newcommand{\textapprox}{\raisebox{0.5ex}{\texttildelow}}
\title{\hspace{-.075em}Generalized Shape Metrics on Neural Representations}
\author{%
  Alex H. Williams \\
  Statistics Department\\
  Stanford University \\
  \texttt{ahwillia@stanford.edu}
  \And
  Erin Kunz \\
  Electrical Engineering Department \\
  Stanford University \\
  \texttt{ekunz@stanford.edu}
  \AND
  Simon Kornblith \\
  Google Research, Toronto \\
  \texttt{skornblith@google.com}
  \And
  Scott W. Linderman \\
  Statistics Department\\
  Stanford University \\
  \texttt{scott.linderman@stanford.edu}
}
\begin{document}

\maketitle

\begin{abstract}
\noindent
Understanding the operation of biological and artificial networks remains a difficult and important challenge.
To identify general principles, researchers are increasingly interested in surveying large collections of networks that are trained on, or biologically adapted to, similar tasks.
A standardized set of analysis tools is now needed to identify how network-level covariates---such as architecture, anatomical brain region, and model organism---impact neural representations (hidden layer activations).
Here, we provide a rigorous foundation for these analyses by defining a broad family of metric spaces that quantify representational dissimilarity.
Using this framework we modify existing representational similarity measures based on canonical correlation analysis to satisfy the triangle inequality, formulate a novel metric that respects the inductive biases in convolutional layers, and identify approximate Euclidean embeddings that enable network representations to be incorporated into essentially any off-the-shelf machine learning method.
We demonstrate these methods on large-scale datasets from biology (Allen Institute Brain Observatory) and deep learning (NAS-Bench-101).
In doing so, we identify relationships between neural representations that are interpretable in terms of anatomical features and model performance.
\end{abstract}

\section{Introduction}

The extent to which different deep networks or neurobiological systems use equivalent representations in support of similar task demands is a topic of persistent interest in machine learning and neuroscience~\cite{Barrett2019,Kriegeskorte2021,Roeder2021}.
Several methods including linear regression~\cite{Yamins2014,Cadena2019}, canonical correlation analysis~(CCA;~\cite{Raghu2017,Morcos2018}), representational similarity analysis~(RSA;~\cite{Kriegeskorte2008}), and centered kernel alignment~(CKA;~\cite{Kornblith2019}) have been used to quantify the similarity of hidden layer activation patterns.
These measures are often interpreted on an ordinal scale and are employed to compare a limited number of networks---e.g., they can indicate whether networks $A$ and $B$ are more or less similar than networks $A$ and $C$.
While these comparisons have yielded many insights~\cite{Kriegeskorte2008,Raghu2017,Morcos2018,Kornblith2019,Maheswaranathan2019,Nguyen2020,Yamins2014,Cadena2019,Shi2019}, the underlying methodologies have not been extended to systematic analyses spanning thousands of networks.

To unify existing approaches and enable more sophisticated analyses, we draw on ideas from \textit{statistical shape analysis}~\cite{Small1996,Kendall1999,dryden2016} to develop dissimilarity measures that are proper metrics---i.e., measures that are symmetric and respect the triangle inequality.
This enables several off-the-shelf methods with theoretical guarantees for classification (e.g. k-nearest neighbors,~\cite{Yianilos1993}) and clustering (e.g. hierarchical clustering~\cite{Dasgupta2005}).
Existing similarity measures can violate the triangle inequality, which complicates these downstream analyses~\cite{Baraty2011,Wang2015,Chang2016}.
However, we show that existing dissimilarity measures can often be modified to satisfy the triangle inequality and viewed as special cases of the framework we outline.
We also describe novel metrics within this broader family that are specialized to convolutional layers and have appealing properties for analyzing artificial networks.

Moreover, we show empirically that these metric spaces on neural representations can be embedded with low distortion into Euclidean spaces, enabling an even broader variety of previously unconsidered supervised and unsupervised analyses.
For example, we can use neural representations as the inputs to linear or nonlinear regression models.
We demonstrate this approach on neural representations in mouse visual cortex (Allen Brain Observatory;~\cite{Siegle2021}) in order to predict each brain region's anatomical hierarchy from its pattern of visual responses---i.e., predicting a feature of brain structure from function.
We demonstrate a similar approach to analyze hidden layer representations in a database of 432K deep artificial networks (NAS-Bench-101;~\cite{Ying19}) and find a surprising degree of correlation between early and deep layer representations.

Overall, we provide a theoretical grounding which explains why existing representational similarity measures are useful: they are often close to metric spaces, and can be modified to fulfill metric space axioms precisely.
Further, we draw new conceptual connections between analyses of neural representations and established research areas~\cite{Cohen2016,dryden2016}, utilize these insights to propose novel metrics, and demonstrate a general-purpose machine learning workflow that scales to datasets with thousands of networks.
\begin{figure}
\centering
\includegraphics[width=\linewidth]{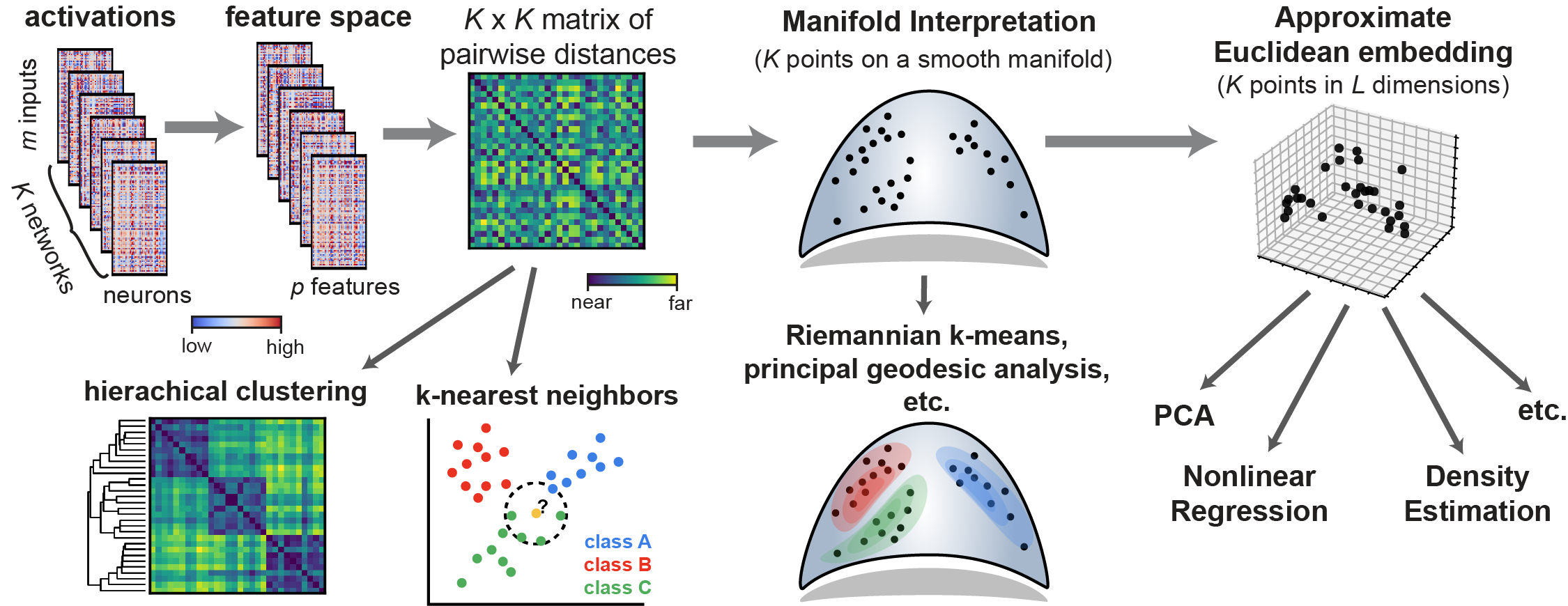}
\caption{
Machine learning workflows enabled by generalized shape metrics.
}
\label{fig1}
\end{figure}

\section{Methods}

This section outlines several workflows (\cref{fig1}) to analyze representations across large collections of networks.
After briefly summarizing prior approaches (sec. \ref{subsec:prior-work}), we cover background material on metric spaces and discuss their theoretical advantages over existing dissimilarity measures (sec. \ref{subsec:metrics-background}).
We then present a class of metrics that capture these advantages (sec. \ref{subsec:generalized-shape-metrics}) and cover a special case that is suited to convolutional layers (sec. \ref{subsec:convolutional-shape-metrics}).
We then demonstrate the practical advantages of these methods in \cref{sec:applications}, and demonstrate empirically that Euclidean feature spaces can approximate the metric structure of neural representations, enabling a broad set of novel analyses.


\subsection{Prior work and problem setup}
\label{subsec:prior-work}

Neural network representations are often summarized over a set of $m$ reference inputs (e.g. test set images).
Let $\mbX_i \in \reals^{m \times n_i}$ and $\mbX_j \in \reals^{m \times n_j}$ denote the responses of two networks (with $n_i$ and $n_j$ neurons, respectively) to a collection of these inputs.
Quantifying the similarity between $\mbX_i$ and $\mbX_j$ is complicated by the fact that, while the $m$ inputs are the same, there is no direct correspondence between the neurons.
Even if $n_i = n_j$, the typical Frobenius inner product, $\langle \mbX_i, \mbX_j \rangle = \Tr [ \mbX_i^\top \mbX_j ]$, and metric, $\Vert \mbX_i - \mbX_j \Vert = \langle \mbX_i - \mbX_j, \mbX_i - \mbX_j \rangle^{1/2}$, fail to capture the desired notion of dissimilarity.
For instance, let $\mbPi$ denote some $n \times n$ permutation matrix and let $\mbX_i = \mbX_j \mbPi$.
Intuitively, we should consider $\mbX_i$ and $\mbX_j$ to be identical in this case since the ordering of neurons is arbitrary.
Yet, clearly $\Vert \mbX_i - \mbX_j \Vert \neq 0$, except in very special cases.

One way to address this problem is to linearly regress over the neurons to predict $\mbX_i$ from $\mbX_j$.
Then, one can use the coefficient of determination~($R^2$) as a measure of similarity~\cite{Yamins2014, Cadena2019}.
However, this similarity score is asymmetric---if one instead treats $\mbX_j$ as the dependent variable that is predicted from $\mbX_i$, this will result in a different $R^2$.
Canonical correlation analysis (CCA; \cite{Raghu2017,Morcos2018}) and linear centered kernel alignment (linear CKA; \cite{Kornblith2019,Cortes2012}) also search for linear correspondences between neurons, but have the advantage of producing symmetric scores.
Representational similarity analysis (RSA;~\cite{Kriegeskorte2008}) is yet another approach, which first computes an $m \times m$ matrix holding the dissimilarities between all pairs of representations for each network.
These \textit{representational dissimilarity matrices} (RDMs), are very similar to the $m \times m$ kernel matrices computed and compared by CKA.
RSA traditionally quantifies the similarity between two neural networks by computing Spearman's rank correlation between their RDMs.
A very recent paper by \textcite{Shahbazi2021}, which was published while this manuscript was undergoing review, proposes to use the Riemannian metric between positive definite matrices instead of Spearman correlation.
Similar to our results, this establishes a metric space that can be used to compare neural representations.
Here, we leverage metric structure over \textit{shape spaces} \cite{Small1996,Kendall1999,dryden2016} instead of positive definite matrices, leading to complementary insights.

In summary, there are a diversity of methods that one can use to compare neural representations.
Without a unifying theoretical framework it is unclear how to choose among them, use their outputs for downstream tasks, or generalize them to new domains.

\subsection{Feature space mapping, metrics, and equivalence relations}
\label{subsec:metrics-background}

Our first contribution will be to establish formal notions of distance (metrics) between neural representations.
To accommodate the common scenario when the number of neurons varies across networks (i.e. when $n_i \neq n_j$), we first map the representations into a common feature space.
For each set of representations, $\mbX_i$, we suppose there is a mapping into a $p$-dimensional feature space, $\mbX_i \mapsto \mbX_i^\phi$, where $\mbX_i^\phi \in \reals^{m \times p}$.
In the special case where all networks have equal size, $n_1 = n_2 = \hdots = n$, we can express the feature mapping as a single function $\phi : \reals^{m \times n} \mapsto \reals^{m \times p}$, so that $\mbX_i^\phi = \phi(\mbX_i)$.
When networks have dissimilar sizes, we can map the representations into a common dimension using, for example, PCA \cite{Raghu2017}.

Next, we seek to establish \textit{metrics} within the feature space, which are distance functions that satisfy:
\begin{align}
\text{Equivalence:}  \quad & d(\mbX_i^\phi, \mbX_j^\phi) = 0 ~\iff~ \mbX_i^\phi \sim \mbX_j^\phi \label{eq:metric-equivalence}
\\
\text{Symmetry:} \quad & d(\mbX_i^\phi, \mbX_j^\phi) = d(\mbX_j^\phi, \mbX_i^\phi)
\label{eq:metric-symmetry}
\\
\text{Triangle Inequality:} \quad & d(\mbX_i^\phi, \mbX_j^\phi) \leq d(\mbX_i^\phi, \mbX_k^\phi) + d(\mbX_k^\phi, \mbX_j^\phi)
\label{eq:metric-triangle-inequality}
\end{align}
for all $\mbX_i^\phi$, $\mbX_j^\phi$, and $\mbX_k^\phi$ in the feature space.
The symbol `$\sim$' denotes an \textit{equivalence relation} between two elements.
That is, the expression $\mbX_i^\phi \sim \mbX_j^\phi$ means that ``$\mbX_i^\phi$ is equivalent to $\mbX_j^\phi$.''
Formally, distance functions satisfying \cref{eq:metric-equivalence,eq:metric-symmetry,eq:metric-triangle-inequality} define a metric over a quotient space defined by the equivalence relation and a pseudometric over $\reals^{m \times p}$ (see Supplement A).
Intuitively, by specifying different equivalence relations we can account for symmetries in network representations, such as permutations over arbitrarily labeled neurons (other options are discussed below in sec. \ref{subsec:generalized-shape-metrics}).

Metrics quantify dissimilarity in a way that agrees with our intuitive notion of distance.
For example,~\cref{eq:metric-symmetry} ensures that the distance from $\mbX_i^\phi$ to $\mbX_j^\phi$ is the same as the distance from $\mbX_j^\phi$ to~$\mbX_i^\phi$.
Linear regression is an approach that violates this condition: the similarity measured by $R^2$ depends on which network is treated as the dependent variable.

Further,~\cref{eq:metric-triangle-inequality} ensures that distances are self-consistent in the sense that if two elements ($\mbX_i^\phi$ and $\mbX_j^\phi$) are both close to a third ($\mbX_k^\phi$), then they are necessarily close to each other.
Many machine learning models and algorithms rely on this triangle inequality condition.
For example, in clustering, it ensures that if $\mbX_i^\phi$ and $\mbX_j^\phi$ are put into the same cluster as $\mbX_k^\phi$, then $\mbX_i^\phi$ and $\mbX_j^\phi$ cannot be too far apart, thus implying that they too can be clustered together.
Intuitively, this establishes an appealing transitive relation for clustering, which can be violated when the triangle inequality fails to hold.
Existing measures based on CCA, RSA, and CKA, are symmetric, but do not satisfy the triangle inequality.
By modifying these approaches to satisfy the triangle inequality, we avoid potential pitfalls and can leverage theoretical guarantees on learning in proper metric spaces \cite{Yianilos1993,Dasgupta2005,Baraty2011,Wang2015,Chang2016}.

\subsection{Generalized shape metrics and group invariance}
\label{subsec:generalized-shape-metrics}

In this section, we outline a new framework to quantify representational dissimilarity, which leverages a well-developed mathematical literature on \textit{shape spaces} \cite{dryden2016,Kendall1999,Small1996}.
The key idea is to treat ${\mbX_i^\phi \sim \mbX_j^\phi}$ if and only if there exists a linear transformation $\mbT$ within a set of allowable transformations $\cG$, such that $\mbX_i^\phi = \mbX_j^\phi \mbT$.
Although $\cG$ only contains linear functions, nonlinear alignments between the raw representations can be achieved when the feature mappings $\mbX_i \mapsto \mbX^\phi_i$ are chosen to be nonlinear.
Much of shape analysis literature focuses on the special case where $p = n$ and $\cG$ is the special orthogonal group $\cS\cO(n) = \{\mbR \in \reals^{n \times n} \mid \mbR^\top \mbR = \mbI , \det(\mbR) = 1 \}$, meaning that $\mbX_i^\phi$ and $\mbX_j^\phi$ are equivalent if there is a $n$-dimensional rotation (without reflection) that relates them.
Standard shape analysis further considers each $\mbX_i^\phi$ to be a mean-centered ($(\mbX_i^\phi)^\top \boldsymbol{1} = \boldsymbol{0}$) and normalized ($\Vert \mbX_i^\phi \Vert = 1$) version of the raw landmark locations held in $\mbX_i \in \reals^{m \times n}$ (an assumption that we will relax).
That is, the feature map $\phi : \bbR^{m \times n} \mapsto \bbS^{m \times n}$ transforms the raw landmarks onto the hypersphere, denoted $\bbS^{m \times n}$, of $m \times n$ matrices with unit Frobenius norm.
In this context, $\mbX_i^\phi \in \bbS^{m \times n}$ is called a ``pre-shape.''
By removing rotations from a pre-shape, $[\mbX^\phi_i] = \{\mbS \in \bbS^{m \times n} \mid \mbS \sim \mbX^\phi_i \}$ for pre-shape $\mbX_i^\phi$, we recover its ``shape.''

\begin{figure}
\centering
\includegraphics[width=\linewidth]{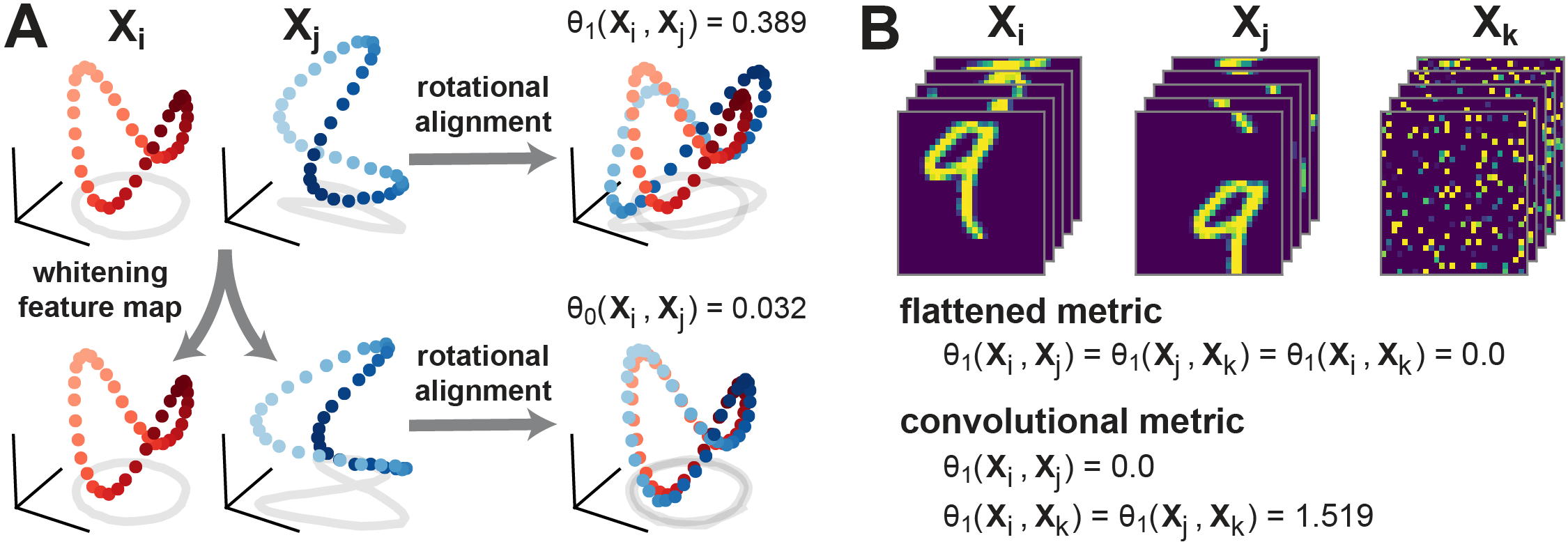}
\caption{
\textit{(A)} Schematic illustration of metrics with rotational invariance (top), and linear invariance (bottom).
Red and blue dots represent a pair of network representations $\mbX_i$ and $\mbX_j$, which correspond to $m$ points in $n$-dimensional space.
\textit{(B)} Demonstration of convolutional metric on toy data.
Flattened metrics (e.g. \cite{Raghu2017,Kornblith2019}) that ignore convolutional layer structure treat permuted images ($\mbX_k$, right) as equivalent to images with coherent spatial structure ($\mbX_i$ and $\mbX_j$, left and middle).
A convolutional metric, \cref{eq:circ-convolution-equivalence}, distinguishes between these cases while still treating $\mbX_i$ and $\mbX_j$ as equivalent (obeying translation invariance).
}
\label{fig2}
\end{figure}

To quantify dissimilarity in neural representations, we generalize this notion of shape to include other feature mappings and alignments.
The minimal distance within the feature space, after optimizing over alignments, defines a metric under suitable conditions (\cref{fig2}A).
This results in a broad variety of \textit{generalized shape metrics} (see also, ch. 18 of \cite{dryden2016}), which fall into two categories as formalized by the pair of propositions below.
Proofs are provided in Supplement B.

\begin{proposition}
\label{proposition:euc-shape-metric}
Let $\mbX_i^\phi \in \reals^{m \times p}$, and let $\cG$ be a group of linear isometries on $\reals^{m \times p}$. Then,
\begin{equation}
d(\mbX_i^\phi, \mbX_j^\phi) = \min_{\mbT \in \cG} ~ \Vert \mbX^\phi_i - \mbX^\phi_j \mbT \Vert
\end{equation}
defines a metric, where $\mbX_i^\phi \sim \mbX_j^\phi$ if and only if there is a $\mbT \in \cG$ such that $\mbX^\phi_i = \mbX^\phi_j \mbT$.
\end{proposition}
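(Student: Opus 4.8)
The plan is to verify the three metric axioms \cref{eq:metric-equivalence}, \cref{eq:metric-symmetry}, and \cref{eq:metric-triangle-inequality} directly, after first recording that the minimum defining $d$ is attained. Since $\cG$ is a group of linear isometries acting on $\reals^{m \times p}$ by right multiplication, each $\mbT \in \cG$ satisfies $\Vert \mbY \mbT \Vert = \Vert \mbY \Vert$ for every $\mbY \in \reals^{m \times p}$; in particular $\cG$ sits inside the (compact) orthogonal group, so the continuous map $\mbT \mapsto \Vert \mbX_i^\phi - \mbX_j^\phi \mbT \Vert$ attains its infimum over the closed group $\cG$. The three structural facts I would lean on throughout are that $\cG$ contains the identity $\mbI$, is closed under inverses, and is closed under composition.

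For symmetry, I would take any $\mbT \in \cG$ and apply the isometry $\mbT^{-1} \in \cG$ on the right of $\mbX_i^\phi - \mbX_j^\phi \mbT$, obtaining $\Vert \mbX_i^\phi - \mbX_j^\phi \mbT \Vert = \Vert \mbX_i^\phi \mbT^{-1} - \mbX_j^\phi \Vert = \Vert \mbX_j^\phi - \mbX_i^\phi \mbT^{-1} \Vert$. Because $\mbT \mapsto \mbT^{-1}$ is a bijection of $\cG$, minimizing the left-hand side over $\cG$ coincides with minimizing the right-hand side, which gives $d(\mbX_i^\phi, \mbX_j^\phi) = d(\mbX_j^\phi, \mbX_i^\phi)$. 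The equivalence axiom is then immediate: since the minimum is attained and $\Vert \cdot \Vert$ is a genuine norm, $d(\mbX_i^\phi, \mbX_j^\phi) = 0$ holds exactly when some $\mbT \in \cG$ makes $\mbX_i^\phi = \mbX_j^\phi \mbT$, which is precisely the definition of $\mbX_i^\phi \sim \mbX_j^\phi$.

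The crux is the triangle inequality, and here I would exploit closure under composition together with the isometry property. Let $\mbT_1$ attain $d(\mbX_i^\phi, \mbX_k^\phi)$ and $\mbT_2$ attain $d(\mbX_k^\phi, \mbX_j^\phi)$, and test the admissible candidate $\mbT_2 \mbT_1 \in \cG$ in the definition of $d(\mbX_i^\phi, \mbX_j^\phi)$. Inserting the telescoping term $\pm\, \mbX_k^\phi \mbT_1$ and applying the ordinary triangle inequality for the Frobenius norm yields $\Vert \mbX_i^\phi - \mbX_j^\phi \mbT_2 \mbT_1 \Vert \leq \Vert \mbX_i^\phi - \mbX_k^\phi \mbT_1 \Vert + \Vert (\mbX_k^\phi - \mbX_j^\phi \mbT_2)\mbT_1 \Vert$. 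The decisive move is to peel $\mbT_1$ off the second term using $\Vert (\cdot)\mbT_1 \Vert = \Vert \cdot \Vert$, collapsing it to $\Vert \mbX_k^\phi - \mbX_j^\phi \mbT_2 \Vert$; since $d(\mbX_i^\phi, \mbX_j^\phi)$ is the minimum over all of $\cG$, it is bounded above by $d(\mbX_i^\phi, \mbX_k^\phi) + d(\mbX_k^\phi, \mbX_j^\phi)$.

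I expect the only genuine subtlety to be bookkeeping about which side the group acts on: the isometries must act consistently by right multiplication so that composing the two optimizers as $\mbT_2 \mbT_1$ and redistributing a single isometry across a difference are both legal, and so that $\cG$-membership of $\mbT^{-1}$ and $\mbT_2 \mbT_1$ is guaranteed by the group axioms. Everything else is routine once these closure and norm-invariance properties are in hand. Finally, I would note, as the statement's preamble does, that $d$ is strictly a pseudometric on $\reals^{m \times p}$ and becomes an honest metric on the quotient induced by $\sim$.
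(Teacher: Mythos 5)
Your proof is correct and follows essentially the same route as the paper's: the supplement proves a generalized proposition for an arbitrary metric $g$ and group of isometries $\cG$, establishing symmetry via the inverse $\mbT_{XY}^{-1} \in \cG$ and the triangle inequality by testing the composed alignment $\mbT_{XZ}\mbT_{ZY}$ and then peeling off an isometry---precisely your telescoping argument, specialized to the Frobenius norm. The one point where you are more careful than the paper is attainment of the minimum (the paper silently writes $\argmin$), though your compactness justification tacitly assumes $\cG$ is topologically closed in $\cO(p)$, which is not automatic for an abstract subgroup and is best stated as a standing hypothesis.
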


\begin{proposition}
\label{proposition:ang-shape-metric}
Let $\mbX_i^\phi \in \bbS^{m \times p}$, and let $\cG$ be a group of linear isometries on $\bbS^{m \times p}$.
Then,
\begin{equation}
\theta(\mbX_i^\phi, \mbX_j^\phi) = \min_{\mbT \in \cG} ~ \arccos \big \langle \mbX^\phi_i, \mbX^\phi_j \mbT \big \rangle
\end{equation}
defines a metric, where $\mbX_i^\phi \sim \mbX_j^\phi$ if and only if there is a $\mbT \in \cG$ such that $\mbX^\phi_i = \mbX^\phi_j \mbT$.
\end{proposition}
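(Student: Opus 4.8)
The plan is to recognize $\theta$ as the quotient, under the group action of $\cG$, of the geodesic (angular) distance on the unit sphere $\bbS^{m \times p}$, in exact parallel with how \cref{proposition:euc-shape-metric} treats the chordal distance. Write $\theta_0(\mbX, \mbY) = \arccos \langle \mbX, \mbY \rangle$ for the base angular distance on $\bbS^{m\times p}$. Since $\arccos$ is strictly decreasing on $[-1,1]$, I would first rewrite $\theta(\mbX,\mbY) = \arccos\big(\max_{\mbT \in \cG} \langle \mbX, \mbY\mbT\rangle\big)$, which makes the identity and symmetry axioms transparent. The argument then splits into two essentially independent parts: (i) $\theta_0$ is a metric on $\bbS^{m\times p}$, and (ii) minimizing a metric over a group of isometries again yields a metric on the quotient.

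For part (i), non-negativity holds because $\langle\mbX,\mbY\rangle \in [-1,1]$ by Cauchy--Schwarz for unit-norm matrices, and $\theta_0(\mbX,\mbY)=0$ iff $\langle\mbX,\mbY\rangle = 1$ iff $\mbX=\mbY$ (the equality case of Cauchy--Schwarz); symmetry is immediate. The substantive step---and the part I expect to be the main obstacle---is the triangle inequality for $\theta_0$, i.e. the spherical triangle inequality. For arbitrary sphere points, writing $\alpha = \theta_0(\mbX,\mbZ)$, $\beta=\theta_0(\mbZ,\mbY)$, and $\gamma=\theta_0(\mbX,\mbY)$, the claim $\gamma \le \alpha+\beta$ is trivial when $\alpha+\beta\ge\pi$ (since $\gamma\le\pi$), so assume $\alpha+\beta<\pi$; as $\cos$ is decreasing on $[0,\pi]$ it then suffices to show $\cos\gamma \ge \cos(\alpha+\beta)$. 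I would prove this by decomposing $\mbX$ and $\mbY$ into parts parallel and orthogonal to $\mbZ$: writing $\mbX = \cos\alpha\,\mbZ + \sin\alpha\,\mbX^\perp$ and $\mbY=\cos\beta\,\mbZ+\sin\beta\,\mbY^\perp$ with $\mbX^\perp,\mbY^\perp$ unit and orthogonal to $\mbZ$, the cross terms vanish and $\langle\mbX,\mbY\rangle = \cos\alpha\cos\beta + \sin\alpha\sin\beta\,\langle\mbX^\perp,\mbY^\perp\rangle \ge \cos\alpha\cos\beta-\sin\alpha\sin\beta = \cos(\alpha+\beta)$, using $\langle\mbX^\perp,\mbY^\perp\rangle\ge -1$ and $\sin\alpha,\sin\beta\ge 0$. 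The degenerate cases where $\mbX$ or $\mbY$ is parallel to $\mbZ$ are checked directly.

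For part (ii), I would use the two defining properties of $\cG$: each $\mbT\in\cG$ is a linear isometry, so $\langle\mbX\mbT,\mbY\mbT\rangle=\langle\mbX,\mbY\rangle$ and hence $\theta_0(\mbX\mbT,\mbY\mbT)=\theta_0(\mbX,\mbY)$; and $\cG$ is closed under inverses and composition. Symmetry follows by applying the isometry $\mbT^{-1}$ inside the inner product, $\langle\mbX,\mbY\mbT\rangle = \langle\mbX\mbT^{-1},\mbY\rangle=\langle\mbY,\mbX\mbT^{-1}\rangle$, and reindexing the maximum by $\mbT\mapsto\mbT^{-1}$. For the triangle inequality I would pick minimizers $\mbS,\mbT\in\cG$ with $\theta(\mbX,\mbZ)=\theta_0(\mbX,\mbZ\mbS)$ and $\theta(\mbZ,\mbY)=\theta_0(\mbZ,\mbY\mbT)$, apply the isometry $\mbS$ to the second leg to get $\theta_0(\mbZ,\mbY\mbT)=\theta_0(\mbZ\mbS,\mbY\mbT\mbS)$, and chain through part (i):
\[
\theta(\mbX,\mbY) \le \theta_0\big(\mbX, \mbY(\mbT\mbS)\big) \le \theta_0(\mbX,\mbZ\mbS) + \theta_0(\mbZ\mbS, \mbY\mbT\mbS) = \theta(\mbX,\mbZ)+\theta(\mbZ,\mbY),
\]
where the first inequality uses $\mbT\mbS\in\cG$. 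Finally, the identity axiom $\theta(\mbX,\mbY)=0\iff\mbX\sim\mbY$ follows from the $\theta_0$ identity together with attainment of the minimum; here I would invoke compactness of $\cG$ (as holds for $\cS\cO(p)$, the orthogonal group, or the permutation group) so that the minimizer exists, forcing $\mbX=\mbY\mbT$ for some $\mbT\in\cG$, i.e. $\mbX\sim\mbY$.
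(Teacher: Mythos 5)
Your proof is correct and follows essentially the same route as the paper: the spherical triangle inequality is established by decomposing into components parallel and orthogonal to the midpoint and applying Cauchy--Schwarz via $\langle \mbX^\perp, \mbY^\perp \rangle \ge -1$ (exactly the argument in Supplement A), and the quotient step chains minimizers through the base metric using group closure and the isometry property (exactly the general proposition of Supplement B, with the action written on the right). Your explicit handling of the $\alpha + \beta \ge \pi$ case and of minimizer attainment via compactness of $\cG$ are minor points of care that the paper leaves implicit.
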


Two key conditions appear in these propositions.
First, $\cG$ must be a \textit{group} of functions.
This means $\cG$ is a set that contains the identity function, is closed under composition ($\mbT_1 \mbT_2 \in \cG$ for any $\mbT_1 \in \cG$ and $\mbT_2 \in \cG$), and whose elements are invertible by other members of the set (if $\mbT \in \cG$ then $\mbT^{-1} \in \cG$).
Second, every $\mbT \in \cG$ must be an \textit{isometry}, meaning that $\Vert \mbX^\phi_i - \mbX^\phi_j \Vert = \Vert \mbX^\phi_i \mbT - \mbX^\phi_j \mbT \Vert$ for all $\mbT \in \cG$ and all elements of the feature space.
On $\reals^{m \times p}$ and $\bbS^{m \times p}$, all linear isometries are orthogonal transformations.
Further, the set of orthogonal transformations, $\cO(p) = \{\mbQ \in \reals^{p \times p} : \mbQ^\top \mbQ = \mbI \}$, defines a well-known group.
Thus, the condition that $\cG$ is a group of isometries is equivalent to $\cG$ being a subgroup of $\cO(p)$---i.e., a subset of $\cO(p)$ satisfying the group axioms.

Intuitively, by requiring $\cG$ to be a group of functions, we ensure that the alignment procedure is symmetric---i.e. it is equivalent to transform $\mbX_i^\phi$ to match $\mbX_j^\phi$, or transform the latter to match the former.
Further, by requiring each $\mbT \in \cG$ to be an isometry, we ensure that the underlying metric (Euclidean distance for \cref{proposition:euc-shape-metric}; angular distance for \cref{proposition:ang-shape-metric}) preserves its key properties.

Together, these propositions define a broad class of metrics as we enumerate below.
For simplicity, we assume that $n_i = n_j = n$ in the examples below, with the understanding that a PCA or zero-padding preprocessing step has been performed in the case of dissimilar network sizes.
This enables us to express the metrics as functions of the raw activations, i.e. functions $\reals^{m \times n} \times \reals^{m \times n} \mapsto \reals_+$.

\paragraph{Permutation invariance}
The most stringent notion of representational similarity is to demand that neurons are one-to-one matched across networks.
If we set the feature map to be the identity function, i.e., $\mbX_i^\phi = \mbX_i$ for all $i$, then: 
\begin{equation}
\label{eq:permutation-distance}
d_\cP(\mbX_i, \mbX_j) = \min_{\mbPi \in \cP(n)} \Vert \mbX_i - \mbX_j \mbPi \Vert
\end{equation}
defines a metric by \Cref{proposition:euc-shape-metric} since the set of permutation matrices, $\cP(n)$, is a subgroup of~$\cO(n)$.
To evaluate this metric we must optimize over the set of neuron permutations to align the two networks.
This can be reformulated (see Supplement C) as a fundamental problem in combinatorial optimization known as the linear assignment problem \cite{Burkard2012}.
Exploiting an algorithm due to Jonker and Volgenant \cite{Jonker1987,Crouse2016} we can solve this problem in $O(n^3)$ time.
The overall runtime for evaluating \cref{eq:permutation-distance} is $O(mn^2 + n^3)$, since we must evaluate $\mbX_i^\top \mbX_j$ to formulate the assignment problem.

\paragraph{Rotation invariance}
Let $\mbC = \mbI_m - (1/m) \boldsymbol{1} \boldsymbol{1}^\top$ denote an $m \times m$ \textit{centering matrix}, and consider the feature mapping $\phi_1$ which mean-centers the columns, $\phi_1(\mbX_i) = \mbX_i^{\phi_1} = \mbC \mbX_i$. Then,
\begin{equation}
\label{eq:procrustes-size-and-shape-distance}
d_1(\mbX_i, \mbX_j) = \min_{\mbQ \in \cO} \Vert \mbX_i^{\phi_1} - \mbX_j^{\phi_1} \mbQ \Vert
\end{equation}
defines a metric by \Cref{proposition:euc-shape-metric}, and is equivalent to the \textit{Procrustes size-and-shape distance} with reflections \cite{dryden2016}.
Further, by \Cref{proposition:ang-shape-metric},
\begin{equation}
\label{eq:riemannian-shape-distance}
\theta_1(\mbX_i, \mbX_j) = \min_{\mbQ \in \cO} ~ \arccos \, \frac{\langle \mbX_i^{\phi_1}, \mbX_j^{\phi_1} \mbQ \rangle}{\Vert \mbX_i^{\phi_1} \Vert \Vert \mbX_j^{\phi_1} \Vert}
\end{equation}
defines another metric, and is closely related to the Riemannian distance on Kendall's shape space \cite{dryden2016}.
To evaluate \cref{eq:procrustes-size-and-shape-distance,eq:riemannian-shape-distance}, we must optimize over the set of orthogonal matrices to find the best alignment.
This also maps onto a fundamental optimization problem known as the \textit{orthogonal Procrustes problem} \cite{Schonemann1966,Gower2004}, which can be solved in closed form in $O(n^3)$ time.
As in the permutation-invariant metric described above, the overall runtime is $O(mn^2 + n^3)$. 

\paragraph{Linear invariance}
Consider a partial whitening transformation, parameterized by $0 \leq \alpha \leq 1$:
\begin{equation}
\label{eq:whitening-feature-map}
\mbX^{\phi_\alpha} = \mbC \mbX (\alpha \mbI_n + (1 - \alpha) (\mbX^\top \mbC \mbX)^{-1/2})
\end{equation}
Note that $\mbX^\top \mbC \mbX$ is the empirical covariance matrix of $\mbX$.
Thus, when $\alpha=0$, \cref{eq:whitening-feature-map} corresponds to ZCA whitening \cite{Kessy2018}, which intuitively removes invertible linear transformations from the representations.
Thus, when $\alpha = 0$ the metric outlined below treats $\mbX_i \sim \mbX_j$ if there exists an affine transformation that relates them: $\mbX_i = \mbX_j \mbW + \mbb$ for some $\mbW \in \reals^{n \times n}$ and $\mbb \in \reals^n$.
When $\alpha=1$, \cref{eq:whitening-feature-map} reduces to the mean-centering feature map used above.

Using orthogonal alignments within this feature space leads to a metric that is related to CCA.
First, let $\rho_1 \geq \hdots \geq \rho_n \geq 0$ denote the singular values of $(\mbX_i^{\phi_\alpha})^\top (\mbX_j^{\phi_\alpha}) / \Vert \mbX_i^{\phi_\alpha} \Vert \Vert \mbX_j^{\phi_\alpha} \Vert$.
One can show that
\begin{equation}
\label{eq:cca-connection}
\theta_\alpha (\mbX_i, \mbX_j) = \min_{\mbQ \in \cO} ~ \arccos \frac{\langle \mbX_i^{\phi_\alpha}, \mbX_j^{\phi_\alpha} \mbQ \rangle}{\Vert \mbX_i^{\phi_\alpha} \Vert \Vert \mbX_j^{\phi_\alpha} \Vert} = \arccos ( \textstyle\sum_\ell \rho_\ell ) \, ,
\end{equation}
and we can see from \Cref{proposition:ang-shape-metric} that this defines a metric for any $0 \leq \alpha \leq 1$.
When $\alpha=0$, the values $\rho_1, \hdots, \rho_n$ are proportional to the canonical correlation coefficients, with $1/n$ being the factor of proportionality.
When $\alpha > 0$, these values can be viewed as ridge regularized canonical correlation coefficients \cite{Vinod1976}.
See Supplement C for further details.
Past works \cite{Raghu2017,Morcos2018} have used the average canonical correlation as a measure of representational similarity.
When $\alpha=0$, the average canonical correlation is given by $\sum_\ell \rho_\ell = \cos \theta_0(\mbX_i, \mbX_j)$.
Thus, if we apply $\arccos(\cdot)$ to the average canonical correlation, we modify the calculation to produce a proper metric (see \cref{fig4}A).
Since the covariance is often ill-conditioned or singular in practice, setting $\alpha > 0$ to regularize the calculation is also typically necessary.

\paragraph{Nonlinear invariances}
We discuss feature maps that enable nonlinear notions of equivalence, and which relate to kernel CCA \cite{Lai2000} and CKA \cite{Kornblith2019}, in Supplement C.



\subsection{Metrics for convolutional layers}
\label{subsec:convolutional-shape-metrics}

In deep networks for image processing, each convolutional layer produces a $h \times w \times c$ array of activations, whose axes respectively correspond to image height, image width, and channels (number of convolutional filters).
If stride-1 circular convolutions are used, then applying a circular shift along either spatial dimension produces the same shift in the layer's output.
It is natural to reflect this property, known as translation equivariance \cite{Cohen2016}, in the equivalence relation on layer representations.
Supposing that the feature map preserves the shape of the activation tensor, we have $\mbX_k^\phi \in \reals^{m \times h \times w \times c}$ for neural networks indexed by $k \in 1, \hdots, K$.
Letting $\cS(n)$ denote the group of $n$-dimensional circular shifts (a subgroup of the permutation group) and `$\otimes$' denote the Kronecker product, we propose:
\begin{equation}
\label{eq:circ-convolution-equivalence}
\mbX_i^\phi \sim \mbX_j^\phi
\iff
\mathrm{vec}(\mbX_i^\phi) = (\mbI \otimes \mbS_1 \otimes \mbS_2 \otimes \mbQ) \mathrm{vec}(\mbX_j^\phi)
\end{equation}
for some $\mbS_1 \in \cS(h)$, $\mbS_2 \in \cS(w)$, $\mbQ \in \cO(c)$, as the desired equivalence relation.
This relation allows for orthogonal invariance across the channel dimension but only shift invariance across the spatial dimensions.
The mixed product property of Kronecker products, $(\mbA \otimes \mbB)(\mbC \otimes \mbD) = \mbA \mbB \otimes \mbC \mbD$, ensures that the overall transformation maintains the group structure and remains an isometry.
\Cref{fig2}B uses a toy dataset (stacked MNIST digits) to show that this metric is sensitive to differences in spatial activation patterns, but insensitive to coherent spatial translations across channels.
In contrast, metrics that ignore the convolutional structure (as in past work \cite{Raghu2017,Kornblith2019}) treat very different spatial patterns as identical representations.

\begin{figure}
\centering
\includegraphics[width=\linewidth]{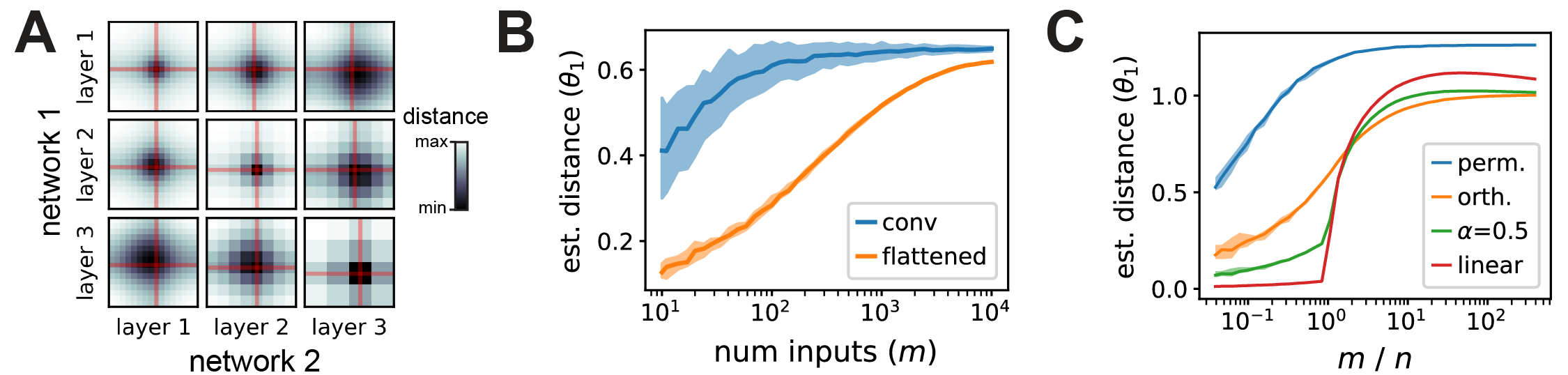}
\caption{
\textit{(A)} Each heatmap shows a brute-force search over the shift parameters along the width and height dimensions of a pair of convolutional layers compared across two networks.
The optimal shifts are typically close to zero (red lines).
\textit{(B)} Impact of sample size, $m$, on flattened and convolutional metrics with orthogonal invariance.
The convolutional metric approaches its final value faster than the flattened metric, which is still increasing even at the full size of the CIFAR-10 test set ($m=10^4$).
\textit{(C)} Impact of sample density, $m / n$, on metrics invariant to permutation, orthogonal, regularized linear ($\alpha = 0.5$), and linear transformations. Shaded regions mark the 10\textsuperscript{th} and 90\textsuperscript{th} percentiles across shuffled repeats. Further details are provided in Supplement E.
}
\label{fig3}
\end{figure}

Evaluating \cref{eq:circ-convolution-equivalence} requires optimizing over spatial shifts in conjuction with solving a Procrustes alignment.
If we fit the shifts by an exhaustive brute-force search, the overall runtime is $O(mh^2w^2c^2 + hwc^3)$, which is costly if this calculation is repeated across a large collection of networks.
In practice, we observe that the optimal shift parameters are typically close to zero (\cref{fig3}A).
This motivates the more stringent equivalence relation:
\begin{equation}
\label{eq:strict-convolution-equivalence}
\mbX_i^\phi \sim \mbX_j^\phi
\iff
\mathrm{vec}(\mbX_i^\phi) = (\mbI \otimes \mbI \otimes \mbI \otimes \mbQ) \mathrm{vec}(\mbX_j^\phi)
\hspace{1em}
\text{for some} 
\hspace{.4em}
\mbQ \in \cQ,
\end{equation}
which has a more manageable runtime of $O(mhwc^2 + c^3)$.
To evaluate the metrics implied by \cref{eq:strict-convolution-equivalence}, we can simply reshape each $\mbX_k^\phi$ from a $(m \times h \times w \times c)$ tensor into a $(mhw \times c)$ matrix and apply the Procrustes alignment procedure as done above for previous metrics.
In contrast, the ``flattened metric'' in \cref{fig2}B reshapes the features into a $(m \times hwc)$ matrix, resulting in a more computationally expensive alignment that runs in $O(mh^2w^2c^2 + h^3w^3c^3)$ time.

\subsection{How large of a sample size is needed?}

An important issue, particularly in neurobiological applications, is to determine the number of network inputs, $m$, and neurons, $n$, that one needs to accurately infer the distance between two network representations \cite{Shi2019}.
Reasoning about these questions rigorously requires a probabilistic perspective of neural representational similarity, which is missing from current literature and which we outline in Supplement D for generalized shape metrics.
Intuitively, looser equivalence relations are achieved by having more flexible alignment operations (e.g. nonlinear instead of linear alignments).
Thus, looser equivalence relations require more sampled inputs to prevent overfitting.
\Cref{fig3}B-C show that this intuition holds in practice for data from deep convolutional networks.
Metrics with looser equivalence relations---the ``flattened'' metric in panel B, or e.g. the linear metric in panel C---converge slower to a stable estimate as $m$ is increased.

\subsection{Modeling approaches and conceptual insights}
\label{subsec:insights}

Generalized shape metrics facilitate several new modeling approaches and conceptual perspectives.
For example, a collection of representations from $K$ neural networks can, in certain cases, be interpreted and visualized as $K$ points on a smooth manifold (see \cref{fig1}).
This holds rigorously due to the \textit{quotient manifold theorem} \cite{Lee2013} so long as $\cG$ is not a finite set (e.g. corresponding to permutation) and all matrices are full rank in the feature space.
This geometric intuition can be made even stronger when $\cG$ corresponds to a connected manifold, such as $\cS\cO(p)$.
In this case, it can be shown that the geodesic distance between two neural representations coincides with the metrics we defined in \Cref{proposition:euc-shape-metric,proposition:ang-shape-metric} (see Supplement C, and \cite{dryden2016}).
This result extends the well-documented manifold structure of \textit{Kendall's shape space} \cite{Kendall1984}.

Viewing neural representations as points on a manifold is not a purely theoretical exercise---several models can be adapted to manifold-valued data (e.g. principal geodesic analysis \cite{Fletcher2007} provides a generalization of PCA), and additional adaptions are an area of active research \cite{geomstats}.
However, there is generally no simple connection between these curved geometries and the flat geometries of Euclidean or Hilbert spaces \cite{Feragen2015}.\footnote{However, see \cite{Feragen2016} for a conjectured relationship and \cite{Jayasumana2013} for a result in the special case of 2D shapes.}
Unfortunately, the majority of off-the-shelf machine learning tools are incompatible with the former and require the latter.
Thus, we can resort to a heuristic approach: the set of $K$ representations can be embedded into a Euclidean space that approximately preserves the pairwise shape distances.
One possibility, employed widely in shape analysis, is to embed points in the tangent space of the manifold at a reference point \cite{Dryden1993,Rohlf1999}.
Another approach, which we demonstrate below with favorable results, is to optimize the vector embedding directly via multi-dimensional scaling \cite{Borg2005,Agrawal2021}.

\section{Applications and Results}
\label{sec:applications}

We analyzed two large-scale public datasets spanning neuroscience (Allen Brain Observatory, ABO; Neuropixels - visual coding experiment; \cite{Siegle2021}) and deep learning (NAS-Bench-101; \cite{Ying19}).
We constructed the ABO dataset by pooling recorded neurons from $K=48$ anatomically defined brain regions across all sessions; each $\mbX_k \in \reals^{m \times n}$ was a dimensionally reduced matrix holding the neural responses (summarized by $n=100$ principal components) to $m=1600$ movie frames (120 second clip, ``natural movie three'').
The full NAS-Bench-101 dataset contains $423{,}624$ architectures; however, we analyze a subset of $K=2000$ networks for simplicity.
In this application each $\mbX_k \in \reals^{m \times n}$ is a representation from a specific network layer, with $(m, n) \in \{(32^2 \times 10^5, 128), (16^2 \times 10^5, 256), (8^2 \times 10^5, 512), (10^5, 512)\}$. Here, $n$ corresponds to the number of channels and $m$ is the product of the number of test set images ($10^5$) and the height and width dimensions of the convolutional layer---i.e., we use equivalence relation in \cref{eq:strict-convolution-equivalence} to evaluate dissimilarity.

\begin{figure}
\centering
\includegraphics[width=\linewidth]{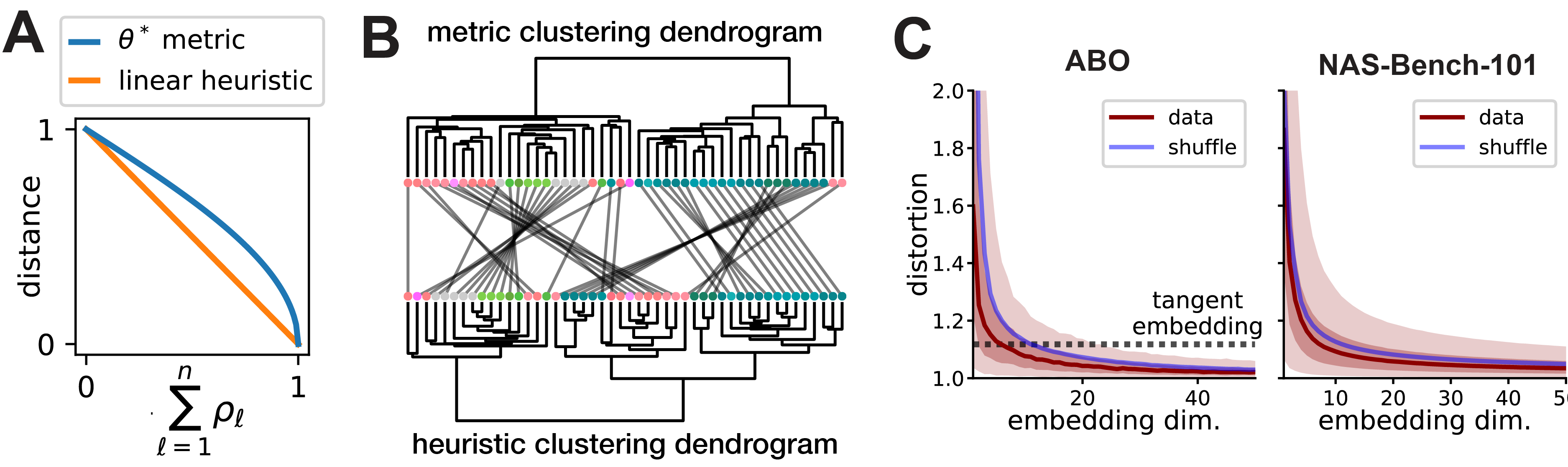}
\caption{
(A) Comparison of metric and linear heuristic.
(B) Metric and linear heuristic produce discordant hierarchical clusterings of brain areas in the ABO dataset.
Leaves represent brain areas that are clustered by representational similarity (see \cref{fig1}C), colored by Allen reference atlas, and ordered to maximize dendrogram similarities of adjacent leaves.
In the middle, grey lines connect leaves corresponding to the same brain region across the two dendrograms.
(C) ABO and NAS-Bench-101 datasets can be accurately embedded into Euclidean spaces. Dark red line shows median distortion.
Light red shaded region corresponds to 5th to 95th percentiles of distortion, dark red shaded corresponds to interquartile range.
The mean distortion of a null distribution over representations (blue line) was generated by shuffling the $m$ inputs independently in each network.
}
\label{fig4}
\end{figure}

\textbf{Triangle inequality violations can occur in practice when using existing methods.}
As mentioned above, a dissimilarity measure based on the mean canonical correlation, $1 - \sum_\ell \rho_\ell$, has been used in past work \cite{Morcos2018,Maheswaranathan2019}.
We refer to this as the ``linear heuristic.''
A slight reformulation of this calculation, $\arccos\, (\sum_\ell \rho_\ell)$, produces a metric that satisfies the triangle inequality (see \cref{eq:cca-connection}).
\Cref{fig4}A compares these calculations as a function of the average (regularized) canonical correlation: one can see that $\arccos(\cdot)$ is approximately linear when the mean correlation is near zero, but highly nonlinear when the mean correlation is near one.
Thus, we reasoned that triangle inequality violations are more likely to occur when $K$ is large and when many network representations are close to each other.
Both ABO and NAS-Bench-101 datasets satisfy these conditions, and in both cases we observed triangle inequality violations by the linear heuristic with full regularization ($\alpha=1$): 17/1128 network pairs in the ABO dataset had at least one triangle inequality violation, while 10128/100000 randomly sampled network pairs contained violations in the NAS-Bench-101 Stem layer dataset.
We also examined a standard version of RSA that quantifies similarity via Spearman's rank correlation coefficient \cite{Kriegeskorte2008}.
Similar to the results above, we observed violations in 14/1128 pairs of networks in the ABO dataset.

Overall, these results suggest that generalized shape metrics correct for triangle inequality violations that do occur in practice.
Depending on the dataset, these violations may be rare (\textapprox1\% occurrence in ABO) or relatively common (\textapprox10\% in the Stem layer of NAS-Bench-101).
These differences can produce quantitative discrepancies in downstream analyses.
For example, the dendrograms produced by hierarchical clustering differ depending on whether one uses the linear heuristic or the shape distance (\textapprox85.1\% dendrogram similarity as quantified by the method in \cite{Gates2019}; see \cref{fig4}B).

\textbf{Neural representation metric spaces can be approximated by Euclidean spaces.}
Having established that neural representations can be viewed as elements in a metric space, it is natural to ask if this metric space is, loosely speaking, ``close to'' a Euclidean space.
We used standard multidimensional scaling methods (SMACOF, \cite{Borg2005}; implementation in \cite{sklearn}) to obtain a set of embedded vectors, $\mby_i \in \reals^L$, for which $\theta_1(\mbX_i^\phi, \mbX_j^\phi) \approx \Vert \mby_i - \mby_j \Vert$ for $i, j \in 1, \hdots, K$.
The embedding dimension $L$ is a user-defined hyperparameter.
This problem admits multiple formulations and optimization strategies \cite{Agrawal2021}, which could be systematically explored in future work.
Our simple approach already yields promising results: we find that moderate embedding dimensions ($L \approx 20$) is sufficient to produce high-quality embeddings.
We quantify the embedding distortions multiplicatively \cite{Vankadara2018}:
\begin{equation}
\max \big ( \theta_1(\mbX_i^\phi, \mbX_j^\phi) / \Vert \mby_i - \mby_j \Vert; ~ \Vert \mby_i - \mby_j \Vert/ \theta_1(\mbX_i^\phi, \mbX_j^\phi) \big)
\end{equation}
for each pair of networks $i, j \in 1, \hdots K$.
Plotting the distortions as a function of $L$ (\cref{fig4}C), we see that they rapidly decrease, such that 95\% of pairwise distances are distorted by, at most, \textapprox5\% (ABO data) or 10\% (NAS-Bench-101) for sufficiently large $L$.
Past work \cite{Maheswaranathan2019} has used multidimensional scaling heuristically to visualize collections of network representations in $L=2$ dimensions.
Our results here suggest that such a small value of $L$, while being amenable to visualization, results in a highly distorted embedding.
It is noteworthy that the situation improves dramatically when $L$ is even modestly increased.
While we cannot easily visualize these higher-dimensional vector embeddings, we can use them as features for downstream modeling tasks.
This is well-motivated as an approximation to performing model inference in the true metric space that characterizes neural representations \cite{Vankadara2018}.

\begin{figure}
\centering
\includegraphics[width=\linewidth]{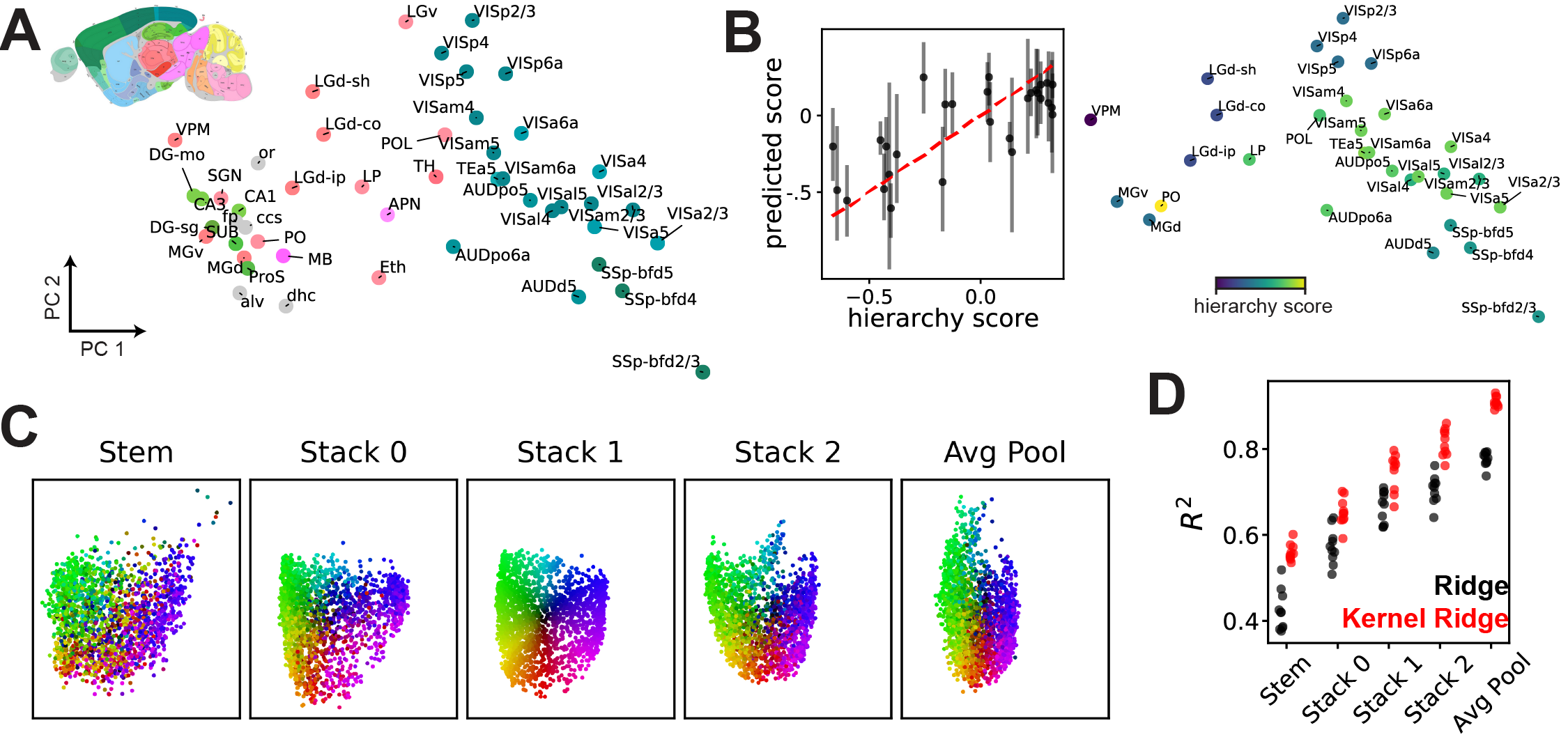}
\caption{
(A) PCA visualization of representations across 48 brain regions in the ABO dataset.
Areas are colored by the reference atlas (see inset), illustrating a functional clustering of regions that maps onto anatomy.
(B) \textit{Left}, kernel regression predicts anatomical hierarchy \cite{Harris2019} from embedded representations (see Supplement E).
\textit{Right}, PCA visualization of 31 areas labeled with hierarchy scores.
(C) PCA visualization of 2000 network representations (a subset of NAS-Bench-101) across five layers, showing global structure is preserved across layers.
Each network is colored by its position in the ``Stack 1'' layer (the middle of the architecture).
(D) Embeddings of NAS-Bench-101 representations are predictive of test set accuracy, \textit{even in very early layers}.
}
\label{fig5}
\end{figure}

\paragraph{Anatomical structure and hierarchy is reflected in ABO representations.}
We can now collect the $L$-dimensional vector embeddings of $K$ network representations into a matrix $\mbZ \in \reals^{K \times L}$.
The results in \cref{fig4}C imply that the distance between any two rows, $\Vert \mbz_i - \mbz_j\Vert$, closely reflects the distance between network representations $i$ and $j$ in shape space.
We applied PCA to $\mbZ$ to visualize the $K=48$ brain regions and found that anatomically related brain regions indeed were closer together in the embedded space (\cref{fig5}A): cortical and sub-cortical regions are separated along PC 1, and different layers of the same region (e.g. layers 2/3, 4, 5, and 6a of VISp) are clustered together.
As expected from \cref{fig4}C, performing multidimensional scaling directly to a low-dimensional space ($L=2$, as done in \cite{Maheswaranathan2019}) results in a qualitatively different outcome with distorted geometry (see Supplement E).
Additionally, we used $\mbZ$ to fit an ensembled kernel regressor to predict an anatomical hierarchy score (defined in \cite{Harris2019}) from the embedded vectors (\cref{fig5}B).
Overall, these results demonstrate that the geometry of the learned embedding is scientifically interpretable and can be exploited for novel analyses, such as nonlinear regression.
To our knowledge, the fine scale anatomical parcellation used here is novel in the context of representational similarity studies.

\textbf{NAS-Bench-101 representations show persistent structure across layers.} 
Since we collected representations across five layers in each deep network, the embedded representation vectors form a set of five $K \times L$ matrices, $\{\mbZ_1, \mbZ_2, \mbZ_3, \mbZ_4, \mbZ_5\}$.
We aligned these embeddings by rotations in $\reals^L$ via Procrustes analysis, and then performed PCA to visualize the $K=2000$ network representations from each layer in a common low-dimensional space.
We observe that many features of the global structure are remarkably well-preserved---two networks that are close together in the \texttt{Stack1} layer are assigned similar colors in \cref{fig5}C, and are likely to be close together in the other four layers.
This preservation of representational similarity across layers suggests that even early layers contain signatures of network performance, which we expect to be present in the \texttt{AvgPool} layer.
Indeed, when we fit ridge and RBF kernel ridge regressors to predict test set accuracy from representation embeddings, we see that even early layers support moderately good predictions (\cref{fig5}D).
This is particularly surprising for the \texttt{Stem} layer.
This is the first layer in each network, and its architecture is identical for all networks.
Thus, the differences that are detected in the \texttt{Stem} layer result only from differences in backpropagated gradients.
Again, these results demonstrate the ability of generalized shape metrics to incorporate neural representations into analyses with greater scale ($K$ corresponding to thousands of networks) and complexity (nonlinear kernel regression) than has been previously explored.

\section{Conclusion and Limitations}

We demonstrated how to ground analyses of neural representations in proper metric spaces.
By doing so, we capture a number of theoretical advantages \cite{Yianilos1993,Dasgupta2005,Baraty2011,Wang2015,Chang2016}.
Further, we suggest new practical modeling approaches, such as using Euclidean embeddings to approximate the representational metric spaces.
An important limitation of our work, as well as the past works we build upon, is the possibility that representational geometry is only loosely tied to higher-level algorithmic principles of network function \cite{Maheswaranathan2019}.
On the other hand, analyses of representational geometry may provide insight into lower-level implementational principles \cite{Hamrick2020}.
Further, these analyses are highly scalable, as we demonstrated by analyzing thousands of networks---a much larger scale than is typically considered.

We used simple metrics (extensions of regularized CCA) in these analyses, but metrics that account for nonlinear transformations across neural representations are also possible as we document in Supplement C.
The utility of these nonlinear extensions remains under-investigated and it is possible that currently popular linear methods are insufficient to capture structures of interest.
For example, the topology of neural representations has received substantial interest in recent years \cite{Rybakken2019,Chaudhuri2019,Rouse2021,Gardner2021}.
Generalized shape metrics do not directly capture these topological features, and future work could consider developing new metrics that do so.
A variety of recent developments in topological data analysis may be useful towards this end \cite{Kusano2018,Moor2020,Jensen2020}.

Finally, several of the metrics we described can be viewed as geodesic distances on Riemannian manifolds \cite{Kendall1984}.
Future work would ideally exploit methods that are rigorously adapted to such manifolds, which are being actively developed \cite{geomstats}.
Nonetheless, we found that optimized Euclidean embeddings, while only approximate, provide a practical off-the-shelf solution for large-scale surveys of neural representations.

\subsection*{Acknowledgments}

We thank Ian Dryden (Florida International University), S{\o}ren Hauberg (Technical University of Denmark), and Nina Miolane (UC Santa Barbara) for fruitful discussions.
A.H.W. was supported by the National Institutes of Health BRAIN initiative (1F32MH122998-01), and the Wu Tsai Stanford Neurosciences Institute Interdisciplinary Scholar Program.
E. K. was supported by the Wu Tsai Stanford Neurosciences Institute Interdisciplinary Graduate Fellows Program.
S.W.L. was supported by grants from the Simons Collaboration on the Global Brain (SCGB 697092) and the NIH BRAIN Initiative (U19NS113201 and R01NS113119).

\printbibliography
\clearpage

\newgeometry{margin=0.75in}

\begin{center}\Large\bfseries
Supplemental Information:\\
Generalized Shape Metrics on Neural Representations
\end{center}

This supplement is organized into five sections.
First, in Supplement~\ref{appendix:a_background}, we review background material on metric spaces and other relevant mathematical concepts.
In Supplement~\ref{appendix:b_main_propositions}, we prove the two propositions that appear in the main text.
Supplement~\ref{appendix:c_additional_theory} collects together several miscellaneous results which demonstrate that generalized shape metrics include similarity measures based on CCA, kernel CCA, and geodesic distance on Kendall's shape space.
In Supplement~\ref{appendix:d_stochastic_layers}, we outline an extension and reinterpretation of generalized shape metrics to stochastic random variables.
This extension represents a rich opportunity for future research and also provides a better foundation to interpret the results presented in Fig. 3 of the main text, which empirically characterize the number of images needed to estimate the distance between two neural networks.
Finally, in Supplement~\ref{appendix:e_experimental_methods}, we collect additional methodological details about the experiments we present in the main text.
\counterwithin{figure}{subsection}

\begin{appendices}

\section{Background}
\label{appendix:a_background}

\subsection{Notation}
\label{appendix:a_background_notation}

Vectors in real coordinate space are denoted in boldface with lowercase letters, e.g. $\mbx \in \reals^n$.
Matrices are denoted in boldface with uppercase letters, e.g. $\mbX \in \reals^{m \times n}$.
We use the same notation to denote linear operators, e.g. $\mbT \in \cG$ where $\cG$ is a set of linear operators.

Letters in regular type face, e.g. $x$ or $X$, may denote scalars or elements of some abstract vector space, with the distinction being made clear from context.
For example, the space of random variables with outcomes over $\reals^n$ defines a vector space that we will see is compatible with the basic framework of generalized shape metrics.
This extension of shape metrics to stochastic layers and neural responses is outlined in Supplement~\ref{appendix:d_stochastic_layers}.

If $\mbT$ is a linear operator on some vector space, and $X$ is a vector within this space, we will use $\mbT X$ to denote the transformation $X \mapsto \mbT(X)$.
Further, if $\mbT_1$ and $\mbT_2$ are linear operators, we write $\mbT_1 \mbT_2 X$ in place of $\mbT_1(\mbT_2(X))$, and we use $\mbT_1 \mbT_2$ to denote the composition of the two linear operators.
These notational choices intuitively draw parallels with matrix-vector and matrix-matrix multiplication, respectively.

\subsection{Metrics}

Here we revisit our definition of a metric given in the main text to provide more rigorous details and clarify the role of the equivalence relation.

\begin{definition}
A \textbf{metric} on a set $\cS$ is a function $\cS \times \cS \mapsto \reals_+$, which satisfies, for all $X, Y, M \in \cS$, the following three conditions:
\begin{itemize}
\item \textit{Identity.} $d(X, Y) = 0$ if and only if $X = Y$
\item \textit{Symmetry.} $d(X, Y) = d(Y, X)$
\item \textit{Triangle Inequality.} $d(X, Y) \leq d(X, M) + d(M, Y)$
\end{itemize}
\label{def:strict-metric}
\end{definition}

We have seen that it is useful to relax the first condition (\textit{Identity}) to an equivalence relation.
That is, rather than strict equality, we demand that $d(X, Y) = 0$ if and only if $X \sim Y$, for some specified equivalence relation $\sim$.
In this scenario, the distance function is \textit{not}, strictly speaking, a metric on $\cS$.
However, it still does define a metric on the appropriate \textit{quotient set}, which we now define.

\begin{definition}
Let $\sim$ denote an equivalence relation defined on some set $\cS$.
Then given any $M \in \cS$, we can define the set of all elements equivalent to $M$ as $\{X \in \cS \mid X \sim M\}$, which is called the \textbf{equivalence class} of the element $M$.
The set of all equivalence classes, denoted $\cS / \sim$, is called the \textbf{quotient set} of $\cS$ with respect to the specified equivalence relation.
\end{definition}

For example, the Euclidean distance $\Vert \mbx - \mby \Vert$ is a metric on the set of vectors in $\reals^n$.
The angular distance $\arccos(\mbx^\top \mby/ \sqrt{\mbx^\top \mbx \cdot \mby^\top \mby})$ is not a metric on $\reals^n$, but it defines a metric between sets of points contained in rays emanating from the origin (i.e. points in $\reals^n$ with an equivalence relation given by nonnegative scaling).
These technical distinctions above are not central to our story, so we will often refer to a function as a ``metric'' without explicitly defining what set it acts upon. 
In all cases, it should be understood as the quotient set defined by the specified equivalence relation.

\subsection{Hilbert spaces}

A \textbf{vector space} $\cH$ is a collection of objects (called vectors) that are equipped with two operations: vector addition (given $X \in \cH$ and $Y \in \cH$ we have $X + Y \in \cH$) and scalar multiplication (given $X \in \cH$ and $\alpha \in \reals$ we have $\alpha X \in \cH$).
An \textbf{inner product space} is a vector space that is additionally equipped with a function $\cH \times \cH \mapsto \reals$, called the \textit{inner product}, which is denoted with angle brackets $\langle \cdot, \cdot \rangle$ and satisfies:
\begin{itemize}
\item \textit{Symmetry.} $\langle X, Y \rangle = \langle Y, X \rangle$
\item \textit{Linearity.} $\langle Z + \alpha X, Y \rangle = \langle Z, Y \rangle + \alpha \langle X, Y \rangle$
\item \textit{Positive Definiteness.} $\langle X, X \rangle \geq 0 ~~\text{with equality if and only if}~~ X = 0$
\end{itemize}
A \textbf{Hilbert space} is an inner product space that satisfies an additional technical requirement (all Cauchy sequences of vectors in $\cH$ converge to a limit in $\cH$).

The set of vectors in $\reals^n$ defines a Hilbert space, where the inner product corresponds to the usual dot product.
Similarly, the set of matrices in $\reals^{m \times n}$, equipped with the Frobenius inner product $\langle \mbX, \mbY \rangle = \Tr [\mbX^\top \mbY]$ also defines a Hilbert space.
In Supplement~\ref{appendix:d_stochastic_layers},  we will exploit the fact that random vectors over $\reals^n$ also define a Hilbert space where the inner product is given by the expectation of the dot product.
This enables us to extend the framework of generalized shape metrics to stochastic neural layers.

\subsection{Euclidean and Angular Distances in Hilbert Spaces}

One of the most fundamental properties of a Hilbert space is the \textbf{Cauchy-Schwarz inequality},
\begin{equation}
| \langle X, Y \rangle | \leq \Vert X \Vert \Vert Y \Vert \quad \text{for all} \quad (X, Y) \in \cH \times \cH,
\end{equation}
which can be derived from the properties of the inner product.
Using this, we can verify that the norm is sub-additive:
\begin{equation}
\Vert X + Y \Vert  \leq \Vert X \Vert + \Vert Y \Vert  \quad \text{for all} \quad (X, Y) \in \cH \times \cH,
\end{equation}
Defining $d_\text{euc}(X, Y) = \Vert X - Y \Vert$ to be the generalization of Euclidean distance to Hilbert spaces, we see that triangle inequality follows immediately:
\begin{equation}
\label{eq:appendix_a_euc_distance}
d_\text{euc}(X, Y) = \Vert X - Y \Vert = \Vert X - M + M - Y \Vert \leq \Vert X - M \Vert + \Vert M - Y \Vert = 
d_\text{euc}(X, M) + d_\text{euc}(M, Y)
\end{equation}
for all choices of $X$, $Y$, and $M$ in $\cH$.
Euclidean distance evidently satisfies the remaining two properties of a metric---symmetry and nonnegativity. 

The angular distance is defined as:
\begin{equation}
\label{eq:appendix_a_ang_distance}
d_\theta(X, Y) = \arccos \left [ \frac{\langle X, Y \rangle}{\Vert X \Vert \Vert Y \Vert} \right ]
\end{equation}
The Cauchy-Schwarz inequality implies that the argument to $\arccos(\cdot)$ is always within its domain (i.e. on the interval $[-1, 1]$).
The angular distance is a metric over equivalence classes defined by nonnegative scaling: formally, $X \sim Y$ if and only if there exists an $s > 0$ such that $X = s Y$.
Geometrically, one can think of $d_\theta(X, Y)$ as the geodesic path length between points on a sphere.
Intuitively, this is nonnegative, symmetric, and obeys the triangle inequality.
We provide a short proof that the triangle inequality is indeed satisfied below.

\begin{proof}[\protect{\textbf{Proof:} Angular distance satisfies the triangle inequality}]
Consider three unit-norm vectors: $X$, $Y$, and $Z$.
The triangle inequality trivially holds if any pair of $X$, $Y$, and $Z$ are equal, so we can assume $X$, $Y$, and $Z$ are distinct.
Now define two vectors $U$ and $V$ as follows:
\begin{align}
U &= X - Y \langle X, Y \rangle
\\
V &= Z - Y \langle Z, Y \rangle
\end{align}
Note that $\langle U, Y \rangle = 0$ and $\langle V, Y \rangle = 0$.
Thus, we can interpret $U$ as the part of $X$ that is orthogonal to $Y$.
Likewise, we can interpret $V$ as the part $Z$ that is orthogonal to $Y$.
Further, we have:
\begin{align}
X &= Y \langle X, Y \rangle + U \langle X, U \rangle = Y \cos \theta_{XY} + U \sin \theta_{XY}
\\
Z &= Y \langle Z, Y \rangle + V \langle Z, V \rangle = Y \cos \theta_{ZY} + V \sin \theta_{ZY}
\end{align}
where we introduced the shorthand $\theta_{XY} = d_\theta(X, Y)$ for concision.
Now,
\begin{align}
\cos \theta_{XZ} = \langle X, Z \rangle
&= \langle Y \cos \theta_{XY} + U \sin \theta_{XY}, Y \cos \theta_{ZY} + V \sin \theta_{ZY} \rangle
\\
&= \cos \theta_{XY} \cos \theta_{ZY} + \langle U, V \rangle \sin \theta_{XY} \sin \theta_{ZY}
\label{eq:angular-metric-proof-semikey-step}
\\
&\geq \cos \theta_{XY} \cos \theta_{ZY} - \sin \theta_{XY} \sin \theta_{ZY}
\label{eq:angular-metric-proof-key-step}
\\
&=\cos(\theta_{XY} + \theta_{ZY})
\label{eq:angular-metric-proof-trig-identity}
\end{align}
On line (\ref{eq:angular-metric-proof-semikey-step}), many terms simplify since $\langle Y, Y \rangle = 1$, and $\langle U, Y \rangle = \langle V, Y \rangle = 0$.
To introduce the inequality on line (\ref{eq:angular-metric-proof-key-step}), notice that the Cauchy-Schwarz inequality implies $\langle U, V \rangle \geq -1$.
Thus, replacing $\langle U, V \rangle$ with $-1$ produces a lower bound on $\cos \theta_{XZ}$ since $\sin \theta_{XY} \sin \theta_{ZY} \geq 0$.
The final step on line (\ref{eq:angular-metric-proof-trig-identity}) applies an elementary identity from trigonometry.
Overall, we have $\cos \theta_{XZ} \geq \cos(\theta_{XY} + \theta_{ZY})$.
This directly implies the desired triangle inequality, $\theta_{XZ} \leq \theta_{XY} + \theta_{ZY}$, since $\arccos(\cdot)$ is a monotonically decreasing function.
\end{proof}

\subsection{The Orthogonal Group}
\label{subsec:the-orthogonal-group}

Another important feature of Hilbert spaces is the notion of an orthogonal transformation.
These are linear transformations which preserve the inner product.
Below, we also define the familiar transpose operator for a general Hilbert space.

\begin{definition}
An \textbf{orthogonal transformation} on a Hilbert space $\cH$ is any linear transformation $\mbQ$, which satisfies $\langle \mbQ X, \mbQ Y \rangle = \langle X, Y \rangle$ for any choice of $X \in \cH$ and $Y \in \cH$.
\end{definition}

\begin{definition}
Let $\mbW : \cV \mapsto \cV$ be a linear transformation on a Hilbert space $\cV$.
For any choice of $\mbW$, there is a unique linear transformation $\mbW^\top$, called the \textbf{transpose} (or \textit{adjoint}) of $\mbW$, which is denoted $\mbW^\top$ and which satisfies $\langle \mbW X, Y \rangle = \langle X, \mbW^\top Y \rangle$ for any choice of $X \in \cV$ and $Y \in \cV$.
\end{definition}

Let $\mbQ$ be orthogonal.
Since $\langle X, Y \rangle = \langle \mbQ X, \mbQ Y \rangle = \langle X, \mbQ^\top \mbQ Y \rangle$, we see that $\mbQ^\top \mbQ$ is the identity transformation and thus $\mbQ^\top$ and $\mbQ$ are inverses.
One can show that these inverses commute, and thus $\mbQ^\top$ is also orthogonal since $\langle \mbQ^\top X, \mbQ^\top Y \rangle = \langle \mbQ \mbQ^\top X, \mbQ \mbQ^\top Y \rangle = \langle X, Y \rangle$.
Finally, let $\mbQ_1$ and $\mbQ_2$ be any pair of orthogonal transformations on $\cV$.
Then, the composition of these transformations $\mbQ_2 \mbQ_1$ is evidently orthogonal, since:
$\langle X, Y \rangle = \langle \mbQ_1 X, \mbQ_1 Y \rangle = \langle \mbQ_2 \mbQ_1 X, \mbQ_2 \mbQ_1 Y \rangle$.

In summary, we have just shown that the inverse of every orthogonal matrix is also orthogonal and orthogonal transformations are closed under composition.
This shows that the set of orthogonal transformations on a Hilbert space fulfills the axioms of a \textbf{group}, as defined below:

\begin{definition}
A \textbf{group} is a set $\cG$ equipped with a binary operation that maps two elements of $\cG$ onto another element of $\cG$, which satisfies:
\begin{enumerate}
    \item Associativity: For all $\mbT_1, \mbT_2, \mbT_3$ in $\cG$, one has $(\mbT_1 \mbT_2) \mbT_3 = \mbT_1 (\mbT_2 \mbT_3)$.
    \item Identity element: There exists a unique element $\mbI \in \cG$ such that $\mbI \mbT = \mbT \mbI = \mbT$ for all $\mbT \in \cG$.
    \item Invertibility: For every $\mbT \in \cG$ there exists another element $\mbT^{-1} \in \cG$ such that $\mbT \mbT^{-1} = \mbT^{-1} \mbT = \mbI$.
\end{enumerate}
\end{definition}

Here, we are only interested in groups of \textit{linear functions}, so the ``binary operation'' referred to above is function composition (see \cref{appendix:a_background_notation} for notational conventions regarding linear operators).

We are particularly interested in groups of (linear) transformations that preserve distances.
Such transformations are called (linear) \textbf{isometries}, which we define below.

\begin{definition}
Let $\cS$ be a set and let $d : \cS \times \cS \mapsto \reals_+$ be a metric on this set.
Then a transformation $\mbT : \cS \mapsto \cS$, is called an \textbf{isometry} on the metric space $(d, \cS)$ if $d(\mbT X, \mbT Y) = d(X, Y)$ for all $X, Y \in \cS$.
\end{definition}

It is easy to see that the orthogonal group is a group of isometries with respect to the (generalized) Euclidean and angular distance metrics.
For the Euclidean distance we have:
\begin{equation}
\begin{aligned}
d_\text{euc}^2(X, Y) = \Vert X - Y \Vert^2 &= \langle X, X \rangle + \langle Y, Y \rangle - 2 \langle X, Y \rangle \\
&= \langle \mbQ X, \mbQ X \rangle + \langle \mbQ Y, \mbQ Y \rangle - 2 \langle \mbQ X, \mbQ Y \rangle \\
&= \Vert \mbQ X - \mbQ Y \Vert^2 = d_\text{euc}^2(\mbQ X, \mbQ Y)
\end{aligned}
\end{equation}
For the angular distance we have:
\begin{equation}
\cos \left [ d_\theta(X, Y) \right ] = 
\frac{\langle X, Y \rangle}{ \Vert X \Vert \Vert Y \Vert} =
\frac{\langle \mbQ X, \mbQ Y \rangle}{ \Vert \mbQ X \Vert \Vert \mbQ Y \Vert} =
\cos \left [ d_\theta(\mbQ X, \mbQ Y) \right ]
\end{equation}

\section{Proof of Propositions 1 \& 2}
\label{appendix:b_main_propositions}

Both propositions in the main text follow immediately as special cases of the following result, which states that minimizing any metric over a group of isometries results in a metric on the corresponding quotient space.
After proving this result we conclude this section by briefly outlining these special cases.

\begin{proposition*}[A generalization of Propositions 1 \& 2]
Let $(g, \cH)$ be a metric space, where ${g : \cH \times \cH \mapsto \reals_+}$ denotes the distance function.
Let $\cG$ be a group of isometries on this metric space.
Then the function:
\begin{equation}
h(X, Y) = \min_{\mbT \in \cG} ~ g(X, \mbT Y)
\end{equation}
\end{proposition*}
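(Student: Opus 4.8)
The plan is to verify the three metric axioms directly, exploiting the two defining properties of $\cG$: closure under composition and inverses (the group structure), and preservation of the inner product (the isometry property). I would dispatch symmetry and the equivalence property first, since each uses only one of these properties in isolation, and reserve the triangle inequality for last, as it is the only step requiring a genuinely nontrivial construction. Throughout, the relevant equivalence relation is $X \sim Y \iff X = \mbT Y$ for some $\mbT \in \cG$, which is itself an equivalence relation precisely because $\cG$ contains the identity, is closed under inverses, and is closed under composition.

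For the equivalence property, if $X \sim Y$ then $X = \mbT_0 Y$ for some $\mbT_0 \in \cG$, so $g(X, \mbT_0 Y) = g(X, X) = 0$ and hence $h(X, Y) = 0$ since $h \geq 0$. Conversely, if $h(X, Y) = 0$ then the minimizing $\mbT$ satisfies $g(X, \mbT Y) = 0$, and the identity axiom for $g$ forces $X = \mbT Y$, i.e. $X \sim Y$; here I rely on the minimum being attained, as written in the statement. For symmetry I would combine the isometry property of each $\mbT$ with symmetry of $g$: writing $g(X, \mbT Y) = g(\mbT^{-1} X, \mbT^{-1}\mbT Y) = g(\mbT^{-1} X, Y) = g(Y, \mbT^{-1} X)$, and observing that $\mbT \mapsto \mbT^{-1}$ is a bijection of $\cG$ onto itself, the two minimizations range over the same set of values, so $h(X, Y) = h(Y, X)$.

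The crux is the triangle inequality $h(X, Y) \leq h(X, M) + h(M, Y)$. Let $\mbT_1$ attain $h(X, M) = g(X, \mbT_1 M)$ and $\mbT_2$ attain $h(M, Y) = g(M, \mbT_2 Y)$. The key idea is to test the left-hand minimization against the composite $\mbT_1 \mbT_2 \in \cG$ (available by closure) and then route through the intermediate point $\mbT_1 M$. Concretely, $h(X, Y) \leq g(X, \mbT_1 \mbT_2 Y) \leq g(X, \mbT_1 M) + g(\mbT_1 M, \mbT_1 \mbT_2 Y)$ by the triangle inequality for $g$. The first term is exactly $h(X, M)$; for the second, the isometry property of $\mbT_1$ gives $g(\mbT_1 M, \mbT_1 \mbT_2 Y) = g(M, \mbT_2 Y) = h(M, Y)$, which completes the bound.

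Finally, these three facts establish that $h$ is a pseudometric on $\cH$ whose zero-set coincides exactly with $\sim$, and the standard quotient construction then promotes it to a genuine metric on $\cH / \sim$. I expect the triangle inequality to be the main obstacle, since it is the one place where both group axioms are needed simultaneously: closure to form $\mbT_1 \mbT_2$ and the isometry property to cancel $\mbT_1$, whereas symmetry and the equivalence property each lean on only one property at a time.
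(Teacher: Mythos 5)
Your proof is correct and takes essentially the same route as the paper's: the triangle inequality via the composite $\mbT_1 \mbT_2 \in \cG$ with isometric cancellation of $\mbT_1$ is identical to the paper's argument, and your bijection phrasing of symmetry ($\mbT \mapsto \mbT^{-1}$ reindexes the minimization) is equivalent to the paper's two-sided inequality chain using $\mbT_{XY}^{-1}$. One cosmetic slip worth fixing: the isometry hypothesis is preservation of the distance $g$ on a general metric space, not of an inner product---which is in fact the property you use throughout.
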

defines a metric over the quotient space $\cH / \sim$ where the equivalence relation is $X \sim Y$ if and only if $X = \mbT Y$ for some $\mbT \in \cG$.
\begin{proof}
First, define $\mbT_{XY} = \argmin_{\mbT \in \cG} g(X, \mbT Y)$. So, $h(X, Y) = g(X, \mbT_{XY} Y)$.
Since $g$ is a metric, $g(X, Y) = 0$ if and only if $X = Y$.
Thus, $h(X, Y) = 0$ if and only if $X = \mbT_{XY} Y$, or equivalently if $X \sim Y$ by the stated equivalence relation.

Next, we prove that $h(X, Y) = h(Y, X)$.
By the group axioms, every element in $\cG$ is invertible by another element in the set, so $\mbT^{-1}_{XY} \in \cG$.
Further, every element of $\cG$ is an isometry with respect to $g$.
Thus,
\begin{equation}
h(X, Y) = g(X, \mbT_{XY} Y) = g(\mbT_{XY}^{-1} X,   \mbT_{XY}^{-1} \mbT_{XY} Y) = g(Y, \mbT_{XY}^{-1} X) \geq g(Y, \mbT_{YX} X) = h(Y, X) \, ,
\end{equation}
where the inequality follows from replacing $\mbT_{XY}^{-1}$ with the optimal $\mbT_{YX} = \argmin_{\mbT \in \cG} g(Y, \mbT X)$.
However, by the same chain of logic, we also have:
\begin{equation}
h(Y, X) = g(Y, \mbT_{YX} X) = g(\mbT_{YX}^{-1} Y,   \mbT_{YX}^{-1} \mbT_{YX} X) = g(X, \mbT_{YX}^{-1} Y) \geq g(X, \mbT_{XY} Y) = h(X, Y) \, .
\end{equation}
Thus, we have $h(X, Y) \geq h(Y, X)$, but also $h(Y, X) \geq h(X, Y)$.
We conclude $h(X, Y) = h(Y, X)$ and $\mbT_{XY}^{-1} = \mbT_{YX}$.

It remains to prove the triangle inequality.
This is done by the following sequence:
\begin{align}
h(X, Y) &= g(X, \mbT_{XY} Y) \\
&\leq g(X, \mbT_{XZ} \mbT_{ZY} Y) \\
&\leq g(X, \mbT_{XZ} Z) + g(\mbT_{XZ} Z, \mbT_{XZ} \mbT_{ZY} Y) \\
&= g(X, \mbT_{XZ} Z) + g(Z, \mbT_{ZY} Y) \\
&= h(X, Z) + h(Z, Y)
\end{align}
The first inequality follows from replacing the optimal alignment, $\mbT_{XY}$, with a sub-optimal alignment $\mbT_{XZ} \mbT_{ZY}$.
The second inequality follows from the triangle inequality on $g$, after choosing $\mbT_{XZ} Z$ as the midpoint.
The penultimate step follows from $\mbT_{XZ}$ being an isometry on $g$.
\end{proof}

\paragraph{Relation to Proposition 1}
The space $\cH$ corresponds to $\reals^{m \times p}$, which is equipped with the typical Frobenius inner product.
The distance function $g$ is Euclidean distance, see \cref{eq:appendix_a_euc_distance}.
The group $\cG$ corresponds to any group of linear isometries which can be expressed as a matrix multiplication on the right.
That is, any transformation from $\reals^{m \times p} \mapsto \reals^{m \times p}$ that can be expressed as $\mbX \mapsto \mbX \mbM$ for some $\mbM \in \reals^{p \times p}$.

\paragraph{Relation to Proposition 2}
The space $\cH$ corresponds to $\bbS^{m \times p}$ (the ``sphere'' of $m \times p$ matrices with unit Frobenius norm).
The distance function $g$ is the angular distance, see \cref{eq:appendix_a_ang_distance}.
The group $\cG$ is defined as done directly above in our discussion of Proposition 1.

\section{Connections to Other Methods}
\label{appendix:c_additional_theory}

This section describes the connections between generalized shape metrics and existing representational similarity measures in greater detail.
For simplicity, we consider quantifying the similarity between two networks with $n$ neurons or hidden layer units.
We use $\mbX \in \reals^{m \times n}$ and $\mbY \in \reals^{m \times n}$ to denote matrices holding the hidden layer activations of two networks over $m$ common test inputs.
In many cases, networks have distinct numbers of neurons or hidden units; however, this can be accommodated by applying PCA or zero-padding representations to achieve a common dimension.

For further simplicity, we will assume that $\mbX$ and $\mbY$ are mean-centered such that $\mbX^\top \boldsymbol{1}_n = \mbY^\top \boldsymbol{1}_n = \boldsymbol{0}_n$, where $\boldsymbol{0}_n$ and $\boldsymbol{1}_n$ respectively denote an $n$-dimensional vector of zeros and ones.
Intuitively, this mean-centering removes the effect of translations in neural activation space when computing distances between neural representations.
In the main text, we show this mean-centering step explicitly as a centering matrix $\mbC \in \reals^{m \times m}$ that is included in the feature map, $\phi$.
The mean-centering step is not strictly required, but is a typical preprocessing step in canonical correlations analysis \cite{Uurtio2017} and Procrustes analysis \cite{dryden2016}.

\subsection{Permutation Invariance \& Linear Assignment Problems}
\label{appendix:c_additional_theory_permutation_subsec}

Consider the problem of finding the best permutation matrix which matches two sets of neural activations in terms of Euclidean distance.
That is, we seek to find
\begin{equation}
\mbPi^* = \argmin_{\mbPi \in \cP} ~ \big\Vert \mbX - \mbY \mbPi \big\Vert
\label{eq:permutation-minimization} \, ,
\end{equation}
where $\cP$ is the set of $n \times n$ permutation matrices.
Note that this is equivalent to finding the permutation matrix that minimizes squared Euclidean distance, and that:
\begin{equation}
\big\Vert \mbX - \mbY \mbPi \big\Vert^2 = \langle \mbX, \mbX \rangle + \langle \mbY, \mbY \rangle  - 2\langle \mbX, \mbY \mbPi \rangle \, .
\label{eq:permutation-reformulation}
\end{equation}
Since $\langle \mbX, \mbX \rangle$ and $\langle \mbY, \mbY \rangle$ are constant terms, the minimization in (\ref{eq:permutation-minimization}) is equivalent to:
\begin{equation}
\mbPi^* = \argmin_{\mbPi \in \cP} ~ - 2\langle \mbX, \mbY \mbPi \rangle = \argmax_{\mbPi \in \cP} ~ \langle \mbX, \mbY \mbPi \rangle = \argmin_{\mbPi \in \cP} ~ d_\theta(\mbX, \mbY \mbPi) .
\end{equation}
The final equality holds since the angular distance is given by a monotonically decreasing function (i.e., $\arccos$) of the maximized inner product.
Finally, using the definition of the Frobenius inner product, $\langle \mbX, \mbY \mbPi \rangle = \Tr[\mbX^\top \mbY \mbPi]$, and so,
\begin{equation}
\mbPi^* = \argmax_{\mbPi \in \cP} ~ \Tr[\mbX^\top \mbY \mbPi] \, .
\label{eq:linear-assignment-problem}
\end{equation}
This final reformulation is the well-known \textit{linear assignment problem} \cite{Burkard2012}.
This can be solved efficiently in $O(n^3)$ time using standard algorithms \cite{Crouse2016}, which are readily available in standard scientific computing environments.
For example, the function \texttt{scipy.optimize.linear\_sum\_assignment} provides an implementation in Python \cite{scipy}.

\subsection{Orthogonal Procrustes Problems}
\label{subsec:orthogonal-procrustes-additional-notes}

Instead of optimizing over permutations, we may wish to optimize over orthogonal transformations.
Given two matrices $\mbX \in \reals^{m \times n}$ and $\mbY \in \reals^{m \times n}$, we seek to find
\begin{equation}
\mbQ^* = \argmin_{\mbQ \in \cO} ~ \Vert \mbX - \mbY \mbQ \Vert \, ,
\label{eq:classic-procrustes}
\end{equation}
where $\cO$ is the set of $n \times n$ orthogonal matrices.
This is known as the orthogonal Procrustes problem \parencite{Gower2004}.
Following the same steps as above in \cref{appendix:c_additional_theory_permutation_subsec}, we can see that $\mbQ^*$ also minimizes the angular distance between two matrices, and maximizes their inner product:
\begin{equation}
\mbQ^* = \argmax_{\mbQ \in \cO} ~ \langle \mbX, \mbY \mbQ \rangle = \argmin_{\mbQ \in \cO} ~ d_\theta( \mbX, \mbY \mbQ ) \, .
\label{eq:inner-prod-procrustes}
\end{equation}
The following lemma states the well-known solution to this problem, which is due to \textcite{Schonemann1966}.

\begin{lemma}[\textcite{Schonemann1966}]
\label{lemma:procrustes-derivation}
Let $\mbU \mbS \mbV^\top$ denote the singular value decomposition of $\mbX^\top \mbY$.
Then ${\mbQ^* = \mbU \mbV^\top}$. Furthermore,
\begin{equation}
\big \langle \mbX, \mbY \mbQ^* \big \rangle = \Vert \mbX^\top \mbY \Vert_* = \sum_i \sigma_i
\label{eq:nuclear-norm-def}
\end{equation}
where $\Vert \cdot \Vert_*$ denotes the nuclear matrix norm and $\sigma_1 \geq \sigma_2 \geq \hdots \geq \sigma_n \geq 0$ are the singular values of $\mbX^\top \mbY$.
\begin{proof}
Let $\mbZ = \mbV^\top \mbQ \mbU$, and note that $\mbZ$ is orthogonal because orthogonal transformations are closed under composition.
The cyclic property of the trace operator implies,
\begin{equation}
\max_{\mbQ \in \cQ} ~ \big \langle \mbX, \mbY \mbQ \big \rangle = \, \max_{\mbQ \in \cQ} ~ \Tr[\mbX^\top \mbY \mbQ ] = \max_{\mbQ \in \cQ} ~ \Tr[\mbS \mbV^\top \mbQ \mbU] = \max_{\mbZ \in \cQ} ~ \Tr[\mbS \mbZ] = \max_{\mbZ \in \cQ} ~ \sum_{i=1}^n \sigma_i z_{ii}
\label{eq:procrustes-derivation}
\end{equation}
where $\{z_{ii}\}_{i=1}^n$ are the diagonal elements of $\mbZ$.
Since $\mbZ$ is orthogonal, we must have $z_{ii} \leq 1$ for all $i \in \{1, \hdots, n\}$.
Since the singular values are nonnegative, the maximum is obtained when each $z_{ii} = 1$.
That is, at optimality we have $\mbZ = \mbV^\top \mbQ^* \mbU = \mbI$, which implies $\mbQ^* = \mbV \mbU^\top$.
Plugging $z_{ii} = 1$ into the final expression of \cref{eq:procrustes-derivation} shows that the optimal objective is given by the sum of the singular values (i.e. the nuclear norm of $\mbX^\top \mbY$).
\end{proof}
\end{lemma}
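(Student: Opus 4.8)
The plan is to reduce the constrained minimization in \cref{eq:classic-procrustes} to a trace maximization over the orthogonal group and then solve that maximization in closed form via the singular value decomposition. The reduction to inner-product form is already recorded in \cref{eq:inner-prod-procrustes}, so I would begin from the identity $\langle \mbX, \mbY\mbQ \rangle = \Tr[\mbX^\top\mbY\mbQ]$ and substitute the SVD $\mbX^\top\mbY = \mbU\mbS\mbV^\top$. Applying the cyclic property of the trace gives $\Tr[\mbS\,\mbV^\top\mbQ\,\mbU]$, which cleanly isolates the diagonal matrix of singular values from the orthogonal factors and sets up the whole argument.

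Next I would perform the change of variables $\mbZ = \mbV^\top\mbQ\mbU$. Because $\mbU$ and $\mbV$ are fixed orthogonal matrices and the orthogonal group is closed under composition and inversion, the map $\mbQ \mapsto \mbV^\top\mbQ\mbU$ is a bijection of $\cO$ onto itself; hence maximizing over $\mbQ \in \cO$ is identical to maximizing over $\mbZ \in \cO$. Since $\mbS$ is diagonal, the objective then collapses to $\Tr[\mbS\mbZ] = \sum_i \sigma_i z_{ii}$, a linear functional of the diagonal entries of $\mbZ$ alone.

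The crux of the argument, and the step I expect to require the most care, is bounding this sum. I would argue that each diagonal entry of an orthogonal matrix satisfies $|z_{ii}| \leq 1$, since every column of $\mbZ$ has unit Euclidean norm and therefore no single coordinate can exceed one in magnitude. Because all singular values are nonnegative, this yields $\sum_i \sigma_i z_{ii} \leq \sum_i \sigma_i$, and the bound is attained by the feasible choice $\mbZ = \mbI$. This simultaneously establishes the optimal value $\langle \mbX, \mbY\mbQ^*\rangle = \sum_i \sigma_i = \Vert \mbX^\top\mbY\Vert_*$ and identifies the maximizer through $\mbV^\top\mbQ^*\mbU = \mbI$, i.e. $\mbQ^* = \mbV\mbU^\top$.

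Finally I would flag the one genuine subtlety. When some singular values vanish (or are repeated), $\mbZ = \mbI$ is no longer the \emph{unique} optimum, because the diagonal entries weighted by a zero $\sigma_i$ are left unconstrained by the objective. This does not affect the claimed optimal value or the validity of $\mbV\mbU^\top$ as \emph{a} solution, but it means the maximizer is only essentially unique when $\mbX^\top\mbY$ has full rank with distinct singular values. I would note this degenerate-case caveat explicitly rather than assert unconditional uniqueness of $\mbQ^*$.
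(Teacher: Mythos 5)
Your proposal is correct and follows essentially the same route as the paper's proof: substitute the SVD into the trace, apply the cyclic property, change variables to $\mbZ = \mbV^\top \mbQ \mbU$, bound the diagonal entries of an orthogonal matrix by one, and attain the bound at $\mbZ = \mbI$, yielding $\mbQ^* = \mbV \mbU^\top$ and the nuclear-norm value. Your two additions---explicitly verifying that $\mbQ \mapsto \mbV^\top \mbQ \mbU$ is a bijection of $\cO$ onto itself, and flagging that the maximizer is non-unique when singular values vanish or repeat---are careful touches the paper's proof leaves implicit, but they do not change the argument.
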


\subsection{Canonical Correlation Analysis (CCA)}
\label{subsec:cca-additional-notes}

CCA identifies matrices $\mbW_x \in \reals^{n \times n}$ and $\mbW_y \in \reals^{n \times n}$ which maximize the correlation between $\mbX \mbW_x$ and $\mbY \mbW_y$.
Formally, this corresponds to the optimization problem:
\begin{equation}
\begin{aligned}
&\underset{\mbW_x, \mbW_y}{\text{maximize}} & & \Tr[ \mbW_x^\top \mbX^\top \mbY \mbW_y ] \\
&\text{subject to} & & \mbW_x^\top \mbX^\top \mbX \mbW_x = \mbW_y^\top \mbY^\top \mbY \mbW_y = \mbI \, .
\end{aligned}
\label{eq:cca-classic}
\end{equation}
The maximized objective function, $\langle \mbX \mbW_x, \mbY \mbW_y \rangle = \Tr[ \mbW_x^\top \mbX^\top \mbY \mbW_y ]$, generalizes the dot product between two vectors to the Frobenius inner product between $\mbX \mbW_x$ and $\mbY \mbW_y$.
The constraints of the optimization problem constrain the magnitude of the solution---without these constraints, the objective function could be infinitely large, since multiplying $\mbW_x$ or $\mbW_y$ by a real number larger than one proportionally increases $\langle \mbX \mbW_x, \mbY \mbW_y \rangle$.
Intuitively, the typical (Pearson) correlation is equal to the normalized inner product of two vectors, and CCA generalizes this to matrix-valued datasets.

CCA can be transformed into the Procrustes problem by a change of variables.
Assuming that $\mbX^\top \mbX$ and $\mbY^\top \mbY$ are full rank, define $\mbH_x = (\mbX^\top \mbX)^{1/2} \mbW_x$ and $\mbH_y = (\mbY^\top \mbY)^{1/2} \mbW_y$.
Then, (\ref{eq:cca-classic}) can be reformulated as:
\begin{equation}
\begin{aligned}
&\underset{\mbH_x, \mbH_y}{\text{maximize}} & & \Tr \big [ \mbH_x^\top (\mbX^\top \mbX)^{-1/2} \mbX^\top \mbY (\mbY^\top \mbY)^{-1/2} \mbH_y \big ] \\
&\text{subject to} & & \mbH_x^\top \mbH_x = \mbH_y^\top \mbH_y = \mbI \, .
\end{aligned}
\label{eq:cca-change-of-vars}
\end{equation}
By this change of variables, we simplified the constraints of the problem so that $\mbH_x$ and $\mbH_y$ are constrained to be orthogonal matrices.
By applying the cyclic property of the trace operator, and defining $\mbQ = \mbH_y \mbH_x^\top$, $\mbX^\phi = \mbX (\mbX^\top \mbX)^{-1/2}$, $\mbY^\phi = \mbY (\mbY^\top \mbY)^{-1/2}$, we can simplify the problem further:
\begin{equation}
\underset{\mbQ \in \cO}{\text{maximize}} \quad \Tr [ (\mbX^\phi)^\top \mbY^\phi \mbQ] \, .
\label{eq:cca-procrustes-interp}
\end{equation}
Thus, we see that CCA is equivalent to solving the Procrustes problem on $\mbX^\phi$ and $\mbY^\phi$.
Note that $(\mbX^\phi)^\top \mbX^\phi = (\mbY^\phi)^\top \mbY^\phi = \mbI$, and so this change of variables can be interpreted as a whitening operation \cite{Kessy2018}.

From \Cref{lemma:procrustes-derivation}, we see that the optimal objective value to (\ref{eq:cca-procrustes-interp}) is given by the sum of the singular values of $(\mbX^\phi)^\top \mbY^\phi$.
These singular values, which we denote here as $1 \geq \sigma_1 \geq ... \geq \sigma_n \geq 0$, are called \textit{canonical correlation coefficients}.
They are bounded above by one since the singular values of $\mbX^\phi$ and $\mbY^\phi$ are all equal to one, due to the whitening step, and the operator norm\footnote{The operator norm of a matrix $\mbM$, denoted $\Vert \mbM \Vert_{\textrm{op}}$, is equal to the largest singular value of $\mbM$.} is sub-multiplicative:
\begin{equation}
\Vert (\mbX^\phi)^\top \mbY^\phi \Vert_{\textrm{op}} \leq \Vert \mbX^\phi \Vert_{\textrm{op}}  \Vert \mbY^\phi \Vert_{\textrm{op}} = 1 \, .
\end{equation}
Putting these pieces together, we see:
\begin{equation}
\min_{\mbQ \in \cO} ~ \arccos \frac{\langle \mbX^\phi, \mbY^\phi \mbQ \rangle}{\Vert \mbX^\phi \Vert \Vert \mbY^\phi \Vert} = \arccos ~ \frac{\Vert (\mbX^\phi)^\top \mbY^\phi \Vert_* }{\sqrt{n} \cdot \sqrt{n}} = \arccos \, \bigg ( \tfrac{1}{n} \sum_{i=1}^n \sigma_i \bigg )
\end{equation}
which coincides with equation 10 in the main text, since $\sigma_i = \rho_i / n$ for the case of CCA.
Proposition 2 implies that this defines a metric since $\mbX^\phi / \Vert \mbX^\phi \Vert$ and $\mbY^\phi / \Vert \mbY^\phi \Vert$ are matrices with unit Frobenius norm, and because the set of orthogonal transformations is a group of isometries, as established in \cref{subsec:the-orthogonal-group}.

\subsection{Ridge CCA}
\label{subsec:ridge-cca}

Next, we consider metrics based on regularized CCA, which essentially interpolate between the orthogonally invariant metrics discussed in \cref{subsec:orthogonal-procrustes-additional-notes}, and the linearly invariant metrics discussed in \cref{subsec:cca-additional-notes}.
This interpolation is accomplished by specifying a hyperparameter $0 \leq \alpha \leq 1$, where $\alpha=0$ corresponds to unregularized CCA and $\alpha=1$ corresponds to Procrustes alignment (i.e. fully regularized).
We formulate this family of optimization problems as:
\begin{equation}
\begin{aligned}
&\underset{\mbW_x, \mbW_y}{\text{maximize}} & & \Tr[ \mbW_x^\top \mbX^\top \mbY \mbW_y ] \\
&\text{subject to} & & \mbW_x^\top ((1 - \alpha) \mbX^\top \mbX + \alpha \mbI) \mbW_x = \mbW_y^\top ((1 - \alpha) \mbY^\top \mbY + \alpha \mbI)  \mbW_y = \mbI \, .
\end{aligned}
\label{eq:ridge-cca-optimization}
\end{equation}
Notice that when $\alpha=1$, the constraints reduce to $\mbW_x$ and $\mbW_y$ being orthogonal, and thus the objective function can be viewed as maximizing $\langle \mbX, \mbY \mbQ \rangle$ over orthogonal matrices $\mbQ = \mbW_y \mbW_x^\top$.
Thus, we recover Procrustes alignment in the limit of $\alpha = 1$.
Clearly, when $\alpha=0$, \cref{eq:ridge-cca-optimization} reduces to the usual formulation of CCA (see \cref{eq:cca-classic}).

We can solve \cref{eq:ridge-cca-optimization} by following essentially the same procedure outlined in \cref{subsec:cca-additional-notes}, in which we reduce the problem to Procrustes alignment by a change of variables.
In this case, the change of variables corresponds to a partial whitening transformation:
\begin{equation}
\mbH_x = ((1 - \alpha)(\mbX^\top \mbX) + \alpha \mbI))^{1/2}
\quad
\text{and}
\quad
\mbH_y = ((1 - \alpha)(\mbY^\top \mbY) + \alpha \mbI))^{1/2} \, .
\end{equation}
Then, reformulate the optimization problem as:
\begin{equation}
\begin{aligned}
&\underset{\mbH_x, \mbH_y}{\text{maximize}} & & \Tr[ \mbH_x^\top ((1 - \alpha)(\mbX^\top \mbX) + \alpha \mbI))^{-1/2} \mbX^\top \mbY ((1 - \alpha)(\mbY^\top \mbY) + \alpha \mbI))^{-1/2} \mbH_y ] \\
&\text{subject to} & & \mbH_x^\top \mbH_x = \mbH_y^\top \mbH_y = \mbI \, .
\end{aligned}
\end{equation}
Let $\mbQ = \mbH_y \mbH_x$, and let
\begin{equation}
\mbX^\phi = \mbX ((1 - \alpha)(\mbX^\top \mbX) + \alpha \mbI))^{-1/2}
\quad
\text{and}
\quad
\mbY^\phi = \mbY ((1 - \alpha)(\mbY^\top \mbY) + \alpha \mbI))^{-1/2} \, .
\end{equation}
Then, by Proposition 2 and \cref{lemma:procrustes-derivation}, we have the following metric:
\begin{equation}
\min_{\mbQ \in \cO} ~ \arccos \frac{\langle \mbX^\phi, \mbY^\phi \mbQ \rangle}{\Vert \mbX^\phi \Vert \Vert \mbY^\phi \Vert} = \arccos ~ \left \Vert \bigg ( \frac{\mbX^\phi}{\Vert \mbX^\phi \Vert} \bigg )^\top \bigg ( \frac{\mbY^\phi}{\Vert \mbY^\phi \Vert} \bigg ) \right \Vert_* = \arccos \, \bigg ( \sum_{i=1}^n \rho_i \bigg ) \,.
\end{equation}

\subsection{Nonlinear Alignments and Kernel CCA}
\label{subsec:nonlinear-kernel-cca}

We can also consider metrics based on \textit{kernel CCA} \parencite{Hardoon2004}, which generalizes CCA to account for nonlinear alignments.
As its name suggests, this approach belongs to a more general class of \textit{kernel methods} that operate implicitly in high-dimensional (even infinite-dimensional) feature spaces through inner product evaluations.
For a broader review of kernel methods in machine learning, see \parencite{Hofmann2008}.

First, we recall the inner product between two matrices in a finite dimensional feature space $\reals^{m \times p}$:
\begin{equation}
\langle \mbX^\phi, \mbY^\phi \rangle = \Tr [ (\mbX^\phi)^\top \mbY^\phi ] = \sum_{i=1}^m (\mbx^\phi_i )^\top (\mby^\phi_i) \, .
\label{eq:innerprod-matrix-case}
\end{equation}
Here we have introduced notation $\mbx_i^\phi$ and $\mby_i^\phi$ to denote the $p$-dimensional vectors holding features to the $i$\textsuperscript{th} network input.
In kernel CCA, we consider more general feature mappings $\mbx_i \mapsto x^\phi_i$ and $\mby_i \mapsto y^\phi_i$, where each $x^\phi_i \in \cH$ and $y^\phi_i \in \cH$ are vectors in some Reproducing Kernel Hilbert Space (RKHS).
That is, instead of having two matrices $\mbX^\phi$ and $\mbY^\phi$ to represent the network representations in the feature space, we instead consider the collections of vectors: $X^\phi = \{x_1^\phi, \hdots, x_m^\phi\}$ and $Y^\phi = \{y_1^\phi, \hdots, y_m^\phi\}$.

Given a choice of a positive-definite kernel function $k$, we begin by computing two $m \times m$ un-centered kernel matrices:
\begin{equation}
[\widetilde{\mbK}_x]_{ij} = k(\mbx_i, \mbx_j) = \langle x_i^\phi, x_j^\phi \rangle
\quad \text{and} \quad
[\widetilde{\mbK}_y]_{ij} = k(\mby_i, \mby_j) = \langle y_i^\phi, y_j^\phi \rangle
\end{equation}
for $i, j \in \{1, \hdots, m\}$.
Then, we define the centered kernel matrices: $\mbK_x = \mbC \widetilde{\mbK}_x \mbC$ and $\mbK_y = \mbC \widetilde{\mbK}_y \mbC$, where $\mbC = \mbI - \tfrac{1}{m} \boldsymbol{1} \boldsymbol{1}^\top$ is the centering matrix.

The classic form of CCA (\ref{eq:cca-classic}) can then be reformulated terms of purely kernel operations \cite{Hardoon2004,Uurtio2017}:
\begin{equation}
\label{eq:nonlinear-kernel-cca}
\begin{aligned}
& \underset{\mbW_x, \mbW_y}{\text{maximize}}
& & \Tr \left [ \mbW_x^\top \mbK_x \mbK_y \mbW_y \right ] \\
& \text{subject to}
& & \mbW^\top_x \mbK_x^2 \mbW_x = \mbW^\top_y \mbK_y^2 \mbW_y = \mbI \, .
\end{aligned}
\end{equation}
One can show that this optimization problem is equivalent (up to a change of variables) from the classic CCA problem when a linear kernel function, $k(\mbx_i, \mbx_j) = \mbx_i^\top \mbx_j$, is used.
Furthermore, one can generalize the regularization scheme for CCA (see \cref{subsec:ridge-cca}), 
\begin{equation}
\begin{aligned}
& \underset{\mbW_x, \mbW_y}{\text{maximize}}
& & \Tr \left [ \mbW_x^\top \mbK_x \mbK_y \mbW_y \right ] \\
& \text{subject to}
& & \mbW^\top_x ( (1-\alpha) \mbK_x^2 + \alpha \mbK_x) \mbW_x = \mbW^\top_y ( (1-\alpha) \mbK_y^2 + \alpha \mbK_y) \mbW_y = \mbI \, .
\end{aligned}
\end{equation}


\subsection{Geodesic Distances on Kendall's Shape Space}

We now consider a modification of the Procrustes alignment problem, where we optimize over the special orthogonal group (i.e. the set of orthogonal matrices with $\textrm{det}(\mbQ) = +1$)
\begin{equation}
\mbR^* = \argmin_{\mbR \in \cS\cO} ~ \Vert \mbX - \mbY \mbR \Vert = \argmax_{\mbR \in \cS\cO} ~ \Tr[\mbX^\top \mbY \mbR] \, .
\end{equation}
We can obtain the solution by a minor modification of \Cref{lemma:procrustes-derivation}.
We let $\mbX^\top \mbY = \tilde{\mbU} \tilde{\mbS} \tilde{\mbV}^\top$ denote the ``optimally signed'' singular value decomposition of $\mbX^\top \mbY$ in which $\tilde{\mbU} \in \cS\cO$, $\tilde{\mbV} \in \cS\cO$, and $\tilde{\mbS}$ is a diagonal matrix of signed singular values: $\tilde{\sigma}_1 \geq \hdots \geq \tilde{\sigma}_{n-1} \geq |\tilde{\sigma}_n| \geq 0$.
Thus, all optimally signed singular values are positive except if $\textrm{det}(\mbX^\top \mbY) < 0$, in which case the final singular value is negated, $\tilde{\sigma}_n = -\sigma_n$, so that $\det(\tilde{\mbU}) = \det(\tilde{\mbV}) = +1$.
Then the optimal rotation is given by $\mbR^* = \tilde{\mbV} \tilde{\mbU}^\top$.
See \textcite{Le1991} for a proof.

We refer the reader to Chapters 4 and 5 of \textcite{dryden2016} for further details.
When $\cG = \cS\cO$, our Proposition 1 corresponds to Riemannian distance in size-and-shape space (sec. 5.3, \cite{dryden2016}).
Likewise, $\cG = \cS\cO$, our Proposition 2 corresponds to Riemannian distance Kendall's shape space (sec. 4.1.4, \cite{dryden2016}).

\subsection{Centered Kernel Alignment (CKA) and Representational Similarity Analysis (RSA)}

Linear CKA \cite{Kornblith2019} and RSA \cite{Kriegeskorte2008} are two closely related methods that, in essence, evaluate the similarity between $\mbX \mbX^\top$ and $\mbY \mbY^\top$ to capture the similarity of neural representations.
When the data are mean-centered as a preprocessing step, these are $m \times m$ covariance matrices capturing the correlations in neural activations over the $m$ test images.
Several variants of RSA exist.
For example, one can compute the pairwise Euclidean distances between all $m$ hidden layer activation patterns, resulting in representational distance matrices (RDMs) instead of the covariance matrices mentioned above.
Likewise, nonlinear extensions of CKA use nonlinear kernel functions to compute centered kernel matrices $\mbK_x$ and $\mbK_y$, as defined above in \cref{subsec:nonlinear-kernel-cca}.
When a linear kernel function is used (i.e. in linear CKA), the centered kernel matrices reduce to the usual covariance matrices $\mbK_x = \mbX \mbX^\top$ and $\mbK_y = \mbY \mbY^\top$.

In essence, these methods proceed by computing the similarity between $\mbK_x$ and $\mbK_y$.
\textcite{Kriegeskorte2008} proposed taking the Spearman correlation between the upper-triangular entries of these matrices.
This measure of similarity does not produce a metric, as we verified empirically in the main text.
\textcite{Kornblith2019} proposed to use the following quantity (assuming centered kernels):
\begin{equation}
\text{CKA}(\mbK_x, \mbK_y) = \frac{\Tr [ \mbK_x \mbK_y ]}{\sqrt{\Tr [\mbK_x^2] \cdot \Tr [\mbK_y^2] }}
\end{equation}
which is known as centered kernel alignment (originally defined in \cite{Cristianini2002,Cortes2012}).

While CKA as originally formulated does not produce a metric, we can modify it to satisfy the requirements of a metric space.
First, note that:
\begin{equation}
\text{CKA}(\mbK_x, \mbK_y) = \cos \big [ d_\theta(\mbK_x, \mbK_y) \big ]
\end{equation}
where $d_\theta$ is the angular distance (see \cref{eq:appendix_a_ang_distance}) over $\reals^{m \times m}$ matrices.
Thus, one can apply $\arccos(\cdot)$ to CKA achieve a proper metric.
For example, a metric based on linear CKA can be calculated as follows:
\begin{equation}
\label{eq:cka-metric-angular-distance}
d_\theta(\mbX \mbX^\top, \mbY \mbY^\top) = \arccos \left [ \frac{\Vert \mbX^\top \mbY \Vert^2}{\Vert \mbX \mbX^\top \Vert \Vert \mbY \mbY^\top \Vert} \right ]
\end{equation}
where, as before, all norms denote the Frobenius matrix norm.
Note that this calculation bears some similarity to the fully regularized CCA distance:
\begin{equation}
\theta_1(\mbX, \mbY) = \min_{\mbQ \in \cO} \arccos \left [ \frac{\langle \mbX, \mbY \mbQ \rangle }{\Vert \mbX \Vert \cdot \Vert \mbY \Vert} \right ] = \arccos \left [ \frac{\Vert \mbX^\top \mbY \Vert_*}{\Vert \mbX \Vert \Vert \mbY \Vert} \right ]
\end{equation}
The two differences between these metrics are that (a) CKA uses the squared Frobenius norm instead of the nuclear norm to measure the scale of $\mbX^\top \mbY$ in the numerator, and (b) CKA normalizes by the norms of the covariances, $\mbX \mbX^\top$ and $\mbY \mbY^\top$, rather than the norms of the matrices themselves.

While this manuscript was undergoing review, \textcite{Shahbazi2021} published a different modification of CKA and RSA to satisfy the properties of a metric space.
They advocate using the Riemannian metric over positive-definite matrices:
\begin{equation}
d(\mbK_x, \mbK_y) = \sqrt{\sum_{i=1}^m \log^2 (\lambda_i) } \,,
\end{equation}
where $\lambda_1, \hdots, \lambda_m$ are the eigenvalues of $\mbK_x^{-1} \mbK_y$.
This calculation is appealing because it exploits the fact that $\mbK_x$ and $\mbK_y$ are positive-definite matrices by construction.
The extension of CKA discussed above utilizes the generic angular distance between $m \times m$ matrices, which are not necessarily positive-definite.

\section{Probabilistic interpretations of generalized shape metrics}
\label{appendix:d_stochastic_layers}

To extend generalized shape metrics to stochastic neural representations, we must introduce some additional notation and formalize network representations as random variables (rather than $m \times n$ matrices).
We can model neural representations as independent random variables when conditioned on the input.
That is, let $X$ and $Y$ denote random variables on $\reals^n$, which correspond to $n$-dimensional neural responses to a stochastic input.\footnote{As in the main text, we can define feature maps $X \mapsto X^\phi$ and $Y \mapsto Y^\phi$ which establish a common dimensionality between networks of dissimilar sizes.}
Further, let $Z$ be some random variable corresponding to process of sampling an input to the network (e.g. choosing one of $m$ input images at random).
Then, the joint distribution over representations and inputs decomposes as $P(X, Y, Z) = P(X \mid Z) P(Y \mid Z) P(Z)$ for any pair of networks $X$ and $Y$.

The goal of this section is to define functions $d(X, Y)$ that are metrics over the set of random variables with outcomes on $\reals^n$, and which are natural extensions of Proposition 1 and 2 in the main text.
The key step towards achieving this goal is to establish a Hilbert space for random vectors.
We provide a short and informal demonstration of this below, but refer the reader to Chapter 2 of \textcite{Tsiatis2007} for a more complete treatment.

First, we establish that the set of random vectors is a vector space.
The zero vector corresponds to a random vector that is equal to the zero vector on $\reals^n$ almost surely.
Vector addition $X + Y$ creates a new random vector from two inputs $X$ and $Y$.
Intuitively, we can draw samples from $X + Y$ by first sampling $X$ and $Y$ and then adding their outcomes.
Scalar multiplication $\alpha X$ creates a new random vector given the input $X$ and a scalar $\alpha \in \reals$.
Intuitively, we can sample $\alpha X$ by first drawing a sample from $X$ and multiplying this outcome by $\alpha$.
We can then define the inner product between two random vectors in the following lemma.

\begin{lemma*}
\label{lemma:inner-product-of-rand-vars}
Let $X$ and $Y$ be random vectors associated with some joint probability density function $p(\mbx, \mby)$ for all $\mbx \in \reals^n$ and $\mby \in \reals^n$.
Then,
\begin{equation}
\langle X, Y \rangle = \expect [ \mbx^\top \mby ] \, ,
\end{equation}
is an inner product over the set of random vectors, where the expectation is taken over joint samples of $X$ and $Y$.
\end{lemma*}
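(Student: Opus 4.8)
The plan is to verify the three inner-product axioms stated in the Hilbert space subsection---\emph{symmetry}, \emph{linearity}, and \emph{positive definiteness}---directly from the definition $\langle X, Y \rangle = \expect[\mbx^\top \mby]$, using only the corresponding properties of the Euclidean dot product on $\reals^n$ together with the linearity of expectation. The first two axioms are essentially bookkeeping; the subtlety lives entirely in the positive-definiteness clause.

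First I would dispatch symmetry. Since $\mbx^\top \mby$ is a scalar and the Euclidean dot product is symmetric, $\mbx^\top \mby = \mby^\top \mbx$ holds for every joint outcome, so taking expectations gives $\langle X, Y \rangle = \expect[\mbx^\top \mby] = \expect[\mby^\top \mbx] = \langle Y, X \rangle$.

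Next, linearity. The key is to translate the vector-space operations on random vectors into operations on their outcomes: a sample of $Z + \alpha X$ is obtained by drawing a joint outcome and forming $\mbz + \alpha \mbx$. Bilinearity of the dot product then gives $(\mbz + \alpha \mbx)^\top \mby = \mbz^\top \mby + \alpha\, \mbx^\top \mby$ pointwise, and linearity of expectation yields $\langle Z + \alpha X, Y \rangle = \expect[\mbz^\top \mby] + \alpha\, \expect[\mbx^\top \mby] = \langle Z, Y \rangle + \alpha \langle X, Y \rangle$.

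The main obstacle is the positive-definiteness axiom, specifically the ``equality if and only if $X = 0$'' clause. Nonnegativity is immediate: $\langle X, X \rangle = \expect[\mbx^\top \mbx] = \expect[\Vert \mbx \Vert^2] \geq 0$, since the integrand is nonnegative for every outcome. The delicate point is that the ``zero vector'' of this space is, as noted above, the random vector equal to $\mb{0}$ almost surely, rather than identically. I would argue that $\expect[\Vert \mbx \Vert^2] = 0$ forces the nonnegative random variable $\Vert \mbx \Vert^2$ to vanish almost surely, hence $\mbx = \mb{0}$ almost surely, which is exactly the statement $X = 0$ in this space; the converse is trivial. To make this rigorous one works on the space $L^2$ of equivalence classes, under almost-sure equality, of random vectors with finite second moment, $\expect[\Vert \mbx \Vert^2] < \infty$. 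This finiteness assumption also guarantees, via Cauchy--Schwarz, that $\langle X, Y \rangle$ is finite and hence well-defined for all admissible $X, Y$, and identifying almost-surely-equal random vectors is precisely what upgrades nonnegativity into strict positive definiteness on the quotient.
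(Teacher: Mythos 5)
Your proposal is correct and follows essentially the same route as the paper's proof: symmetry and linearity are verified pointwise from the Euclidean dot product together with linearity of expectation, and positive definiteness reduces to the fact that $\expect[\mbx^\top \mbx] = 0$ forces $\mbx = \boldsymbol{0}$ almost surely. Your two deviations are both refinements rather than a different argument: you get nonnegativity directly from the pointwise-nonnegative integrand where the paper takes an unnecessary detour through Jensen's inequality, and you make explicit the $L^2$ (finite second moment) assumption and the quotient by almost-sure equality that the paper---which presents its demonstration as informal, deferring to \textcite{Tsiatis2007}---leaves implicit.
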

\begin{proof}
Using the linearity of expectation and the inner product on $\reals^n$, it is easy to prove that the inner product is symmetric,
\begin{equation}
\langle X, Y \rangle = \expect[ \mbx^\top \mby ] = \expect [\mby^\top \mbx] = \langle Y, X \rangle \, ,
\end{equation}
and linear,
\begin{equation}
\langle M + \alpha X, Y \rangle = \expect [ (\mbz + \alpha \mbx)^\top \mby ] = \expect [ \mbz^\top \mby ] + \alpha \expect [ \mbx^\top \mby ] = \langle M, Y \rangle  + \alpha \langle X, Y \rangle
\end{equation}
for any random vector $M$ and $\alpha \in \reals$.
All that remains is to prove is that $\langle \cdot, \cdot \rangle$ is positive definite, we first note that the mapping $\mbx \mapsto \mbx^\top \mbx$ is a convex function of $\mbx$.
Then, we apply Jensen's inequality and the positive definiteness of the inner product on $\reals^n$ to show:
\begin{equation}
\langle X, X \rangle = \expect [ \mbx^\top \mbx ] \geq (\expect \mbx)^\top (\expect \mbx) \geq 0 \,.
\end{equation}
Further $\expect [\mbx^\top \mbx] = 0$ only when $\mbx = \boldsymbol{0}$, almost surely.
Thus, $\langle X, X \rangle = 0$ if and only if $X = 0$.
\end{proof}

To begin, we consider a special case where the neural responses are deterministic, but the inputs are randomly chosen.
That is, to draw a sample of $(X, Y)$, we first sample an input $\mbz \sim P(Z)$ and then calculate $\mbx = f_x(\mbz)$ and $\mby = f_y(\mbz)$, where $f_x$ and $f_y$ are functions mapping the input space to $\reals^n$.

In the simplest case, $P(Z)$ is a uniform distribution over a discrete set of $m$ network inputs. In this case, we can compute the required inner products exactly.
Let $\mbz_i$ denote the $i$\textsuperscript{th} input to the networks, and let $\mbX \in \reals^{m \times n}$ and $\mbY \in \reals^{m \times n}$ denote matrices that stack the neural responses, $f_x(\mbz_i)$ and $f_y(\mbz_i)$ row-wise.
Then we have
\begin{equation}
\langle X, Y \rangle = \expect [ \mbx^\top \mby ] = \frac{1}{m} \sum_{i=1}^m f_x(\mbz_i)^\top f_y(\mbz_i) = \frac{1}{m} \langle \mbX, \mbY \rangle \, ,
\label{eq:rand-var-inner-product-discrete-case}
\end{equation}
where the final inner product $\langle \mbX, \mbY \rangle = \Tr[\mbX^\top \mbY]$ is the typical Frobenius inner product between matrices that we have used throughout.
Because these inner products coincide up to a uniform scaling factor, we can reinterpret the metrics defined in the main text (Propositions 1 \& 2) as providing a notion of distance between deterministic neural responses that are drawn uniformly from a set of $m$ inputs.

In many cases, the number of possible inputs to a network is effectively infinite, so we can consider $P(Z)$ to be a continuous distribution.
In this scenario, the inner product becomes:
\begin{equation}
\langle X, Y \rangle = \int p(\mbz) f_x(\mbz)^\top f_y(\mbz) \, \textrm{d}\mbz
\end{equation}
which is generally intractable to compute.
For example, we typically do not know how to evaluate the density $p(\mbz)$.
This is the case, for example, when $P(Z)$ corresponds to the distribution over all ``natural images.''
If we are given independent samples $\mbz_i \sim P(Z)$, for $i = 1, \hdots, m$, then the integral can be approximated as
\begin{equation}
\int p(\mbz) f_x(\mbz)^\top f_y(\mbz) \, \textrm{d}\mbz \approx \frac{1}{m} \sum_{i=1}^m f_x(\mbz_i)^\top f_y(\mbz_i) = \frac{1}{m} \langle \mbX, \mbY \rangle  \, ,
\label{eq:rand-var-inner-product-continuous-case}
\end{equation}
which coincides with (\ref{eq:rand-var-inner-product-discrete-case}).
Thus, we can also interpret generalized shape metrics (Propositions 1 \& 2) as being approximations to metrics that capture representational dissimilarity over a continuous distribution of input patterns.
This final interpretation is appealing from both scientific and engineering perspectives.
In neuroscience, we expect animals to encounter sensory input patterns probabilistically from an effectively infinite range of possibilities.
Likewise, in machine learning, we are interested in how deep artificial networks generalize to ``real-world'' applications.
In short, the space of possible future inputs is generally more numerous than the space of inputs used for training and validation.
Nonetheless, if the statistics of the test set match the ``real world,'' then (\ref{eq:rand-var-inner-product-continuous-case}) tells us that we can approximate the ``true'' distance between network representations appropriately.

The results shown in Figures 3B and 3C in the main text can now be properly interpreted as varying the choice of $m$ (sample size) in the approximation of the integral appearing in equation (\ref{eq:rand-var-inner-product-continuous-case}).

The framework above can also be readily extended to define metrics between stochastic neural representations, which are ubiquitous in both biology (due to ``noise'') and machine learning (e.g. dropout layers).
We view this as an intriguing direction for future research that is enabled by our theoretical framing of neural representations.

\section{Experimental Methods}
\label{appendix:e_experimental_methods}

Code accompanying this paper can be found at --- \url{https://github.com/ahwillia/netrep}

\subsection{Experiments on sample size (Fig. 3)}

We ran all experiments on a pair of convolutional neural networks trained on CIFAR-10.
The architecture is shown in Table~\ref{tab:architecture}.
In Figure 3A, we sampled activations from the three layers following the stride-2 convolutions.
We did a brute-force search over circular shifts along the width and height dimensions.
When comparing two layers with unequal dimensions, we upsampled the layer with smaller width and height by linear interpolation.
The remaining panels in Figure 3 were computed using activations from the final layer before average pooling.

\begin{table}[h]
    \centering
    \begin{tabular}{p{5cm}}
        \toprule
        $3\times 3$ conv. 64-BN-ReLU\\
        $3\times 3$ conv. 64-BN-ReLU\\
        $3\times 3$ conv. 64-BN-ReLU\\
        $3\times 3$ conv. 64 stride 2-BN-ReLU\\
        $3\times 3$ conv. 128-BN-ReLU\\
        $3\times 3$ conv. 128-BN-ReLU\\
        $3\times 3$ conv. 128-BN-ReLU\\
        $3\times 3$ conv. 128 stride 2-BN-ReLU\\
        $3\times 3$ conv. 256-BN-ReLU\\
        $3\times 3$ conv. 256-BN-ReLU\\
        $3\times 3$ conv. 256-BN-ReLU\\
        $3\times 3$ conv. 256 stride 2-BN-ReLU\\
        Global average pooling\\
        Logits\\
        \bottomrule
    \end{tabular}
    \caption{The architecture used for experiments in Fig. 3. All convolutions use zero padding to maintain the size of the feature map.}
    \label{tab:architecture}
\end{table}

\subsection{Allen Brain Observatory}

Data were accessed through the Allen Software Development Kit (AllenSDK --- \url{https://allensdk.readthedocs.io/en/latest/}).
All isolated single units that met the default quality control standards were loaded and pooled across sessions.
The anatomical location of each unit in Common Coordinate Framework (CCF; \cite{Wang2020}) was extracted and categorized into anatomical regions according to the reference atlas, using the finest scale anatomical parcellation.
Spike counts were calculated over \texttt{0.033355} ms timebins (duration of a single movie frame), over 1600 frames.
Spikes were then smoothed with a Gaussian filter with a standard deviation of 20 bins (frames), and averaged over 10 trials (repeats of the movie).
Then, we projected the data onto the top 100 principal components, resulting in a matrix $\mbX_k \in \reals^{1600 \times 100}$ for each brain region $k = \{1, \hdots, K\}$.
Regions with fewer than 100 neurons across all sessions were excluded.
The following set of 48 regions, listed by their standard abbreviations, contained more than 100 neurons and were then studied for further analysis: \texttt{APN, AUDd5, AUDpo5, AUDpo6a, CA1, CA3, DG-mo, DG-sg, Eth, LGd-co, LGd-ip, LGd-sh, LGv, LP, MB, MGd, MGv, PO, POL, ProS, SGN, SSp-bfd2/3, SSp-bfd4, SSp-bfd5, SUB, TEa5, TH, VISa2/3, VISa4, VISa5, VISa6a, VISal2/3, VISal4, VISal5, VISam2/3, VISam4, VISam5, VISam6a, VISp2/3, VISp4, VISp5, VISp6a, VPM, alv, ccs, dhc, fp, or}.

Dendrograms were computed and visualized using tools available in the scipy library \cite{scipy}.
We used Ward's linkage criterion to compute the hierarchical clusterings.

We performed kernel ridge regression to predict anatomical hierarchy scores (defined in \cite{Harris2019}) 29 regions: \texttt{AUDd5, AUDpo5, AUDpo6a, LGd-co, LGd-ip, LGd-sh, LP, MGd, MGv, POL, SSp-bfd2/3, SSp-bfd4, SSp-bfd5, TEa5, VISa2/3, VISa4, VISa5, VISa6a, VISal2/3, VISal4, VISal5, VISam2/3, VISam4, VISam5, VISam6a, VISp2/3, VISp4, VISp5, VISp6a}.
Two regions, \texttt{PO} and \texttt{VPM}, were excluded from the analysis as they were outliers with exceptionally high and low hierarchy scores.
The other regions were excluded because they either had undefined hierarchy scores or had fewer than 100 neurons.
We used the scikit-learn implementation of kernel ridge regression, \texttt{KernelRidge(alpha=0.01, gamma=1.0, kernel="rbf")}, and fit the model 100 separate times on different approximate Euclidean embeddings found by multi-dimensional scaling (MDS).
The error bars in Fig. 5B show range of estimates from different MDS embeddings.
An embedding dimension of $L=20$ was used in all cases.

\begin{figure}[htbp]
\centering
\includegraphics[width=\linewidth]{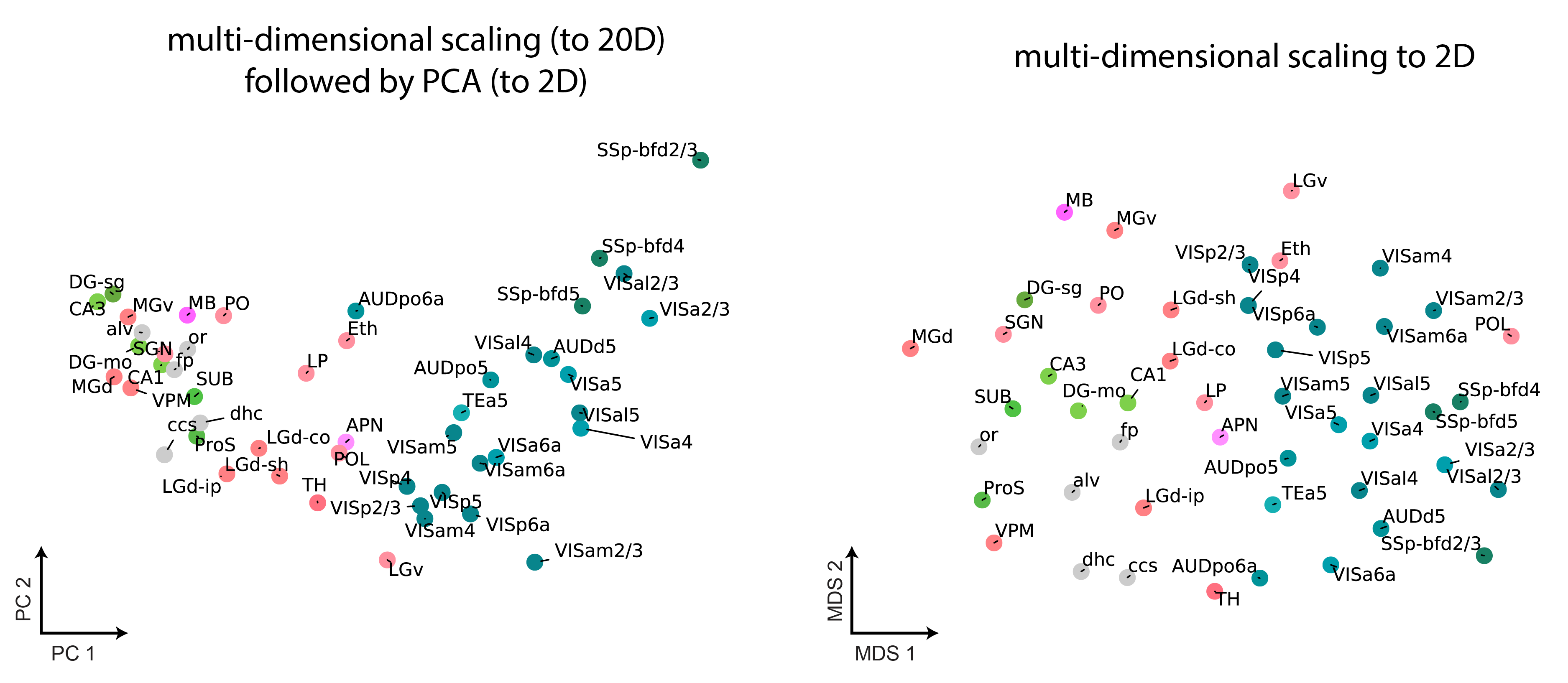}
\caption{
Performing MDS directly to $L=2$ dimensions (right) produces a distinct low-dimensional visualization of the ABO dataset from multi-dimensional scaling to $L=20$ dimensions, following by PCA projection down to 2D (left).
As shown in the main text (Fig. 4C), the MDS embedding to $L=20$ dimensions produces a dramatically better approximation of the true metric space than the embedding to $L=2$ dimensions.
Thus, we advocate using the former over the latter for downstream modeling tasks.
}
\label{fig:mds_supplement}
\end{figure}

If our goal is only to visualize the data in 2D we may apply MDS with an embedding dimension of $L=2$.
How does this embedding differ from a larger embedding of $L=20$?
\Cref{fig:mds_supplement} demonstrates that qualitatively distinct structures emerge from these two procedures.

\subsection{NAS-Bench-101}
We obtained checkpoints for 2000 randomly-selected NAS-Bench-101 architectures trained for 108 epochs following the protocol described in~\cite{Ying19} and computed the similarity between activations of every possible pair of these architectures on the CIFAR-10 test set, using an Apache Beam pipeline operating on offsite hardware. In total, the computational cost of these experiments was 260 core-years, including pilot experiments and several experiments not included in the paper.

For ridge regression analyses in Fig. 5D, we train on 80\% of the data, use 10\% of the data as a validation set to select the optimal ridge hyperparameter and the kernel bandwidth, and compute $R^2$ on remaining 10\% of the data. 

In Figure~\ref{nasbench_arch}, we show the skeleton of the NAS-Bench-101 architecture along with the layers from which we extract representations.
\begin{figure}[htbp]
\centering
\includegraphics[width=0.2\linewidth]{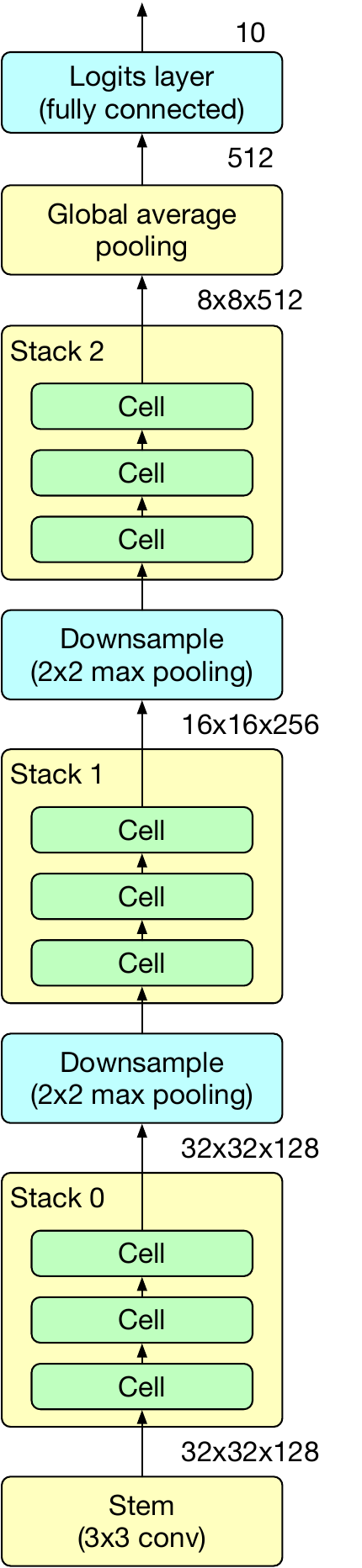}
\caption{
Diagram of the skeleton of the NAS-Bench-101 architecture. The architecture of each cell (shown in green) is selected from a fixed space, described further by \textcite{Ying19}, and all cells within a single architecture are identical except for the number of channels, which differs by stack. In Fig. 5, we show the results we obtain by analyzing the representations of the outputs of the layers shown in yellow.
}
\label{nasbench_arch}
\end{figure}

\end{appendices}

\end{document}